
\documentclass{article}

\usepackage{microtype}
\usepackage{graphicx}
\usepackage{subfigure}
\usepackage{booktabs} 
\usepackage{bm}




\usepackage[pagebackref=true,breaklinks=true,colorlinks,urlcolor={red!90!black},linkcolor={red!90!black},citecolor={blue!90!black},bookmarks=false]{hyperref}
\usepackage[accepted]{icml2025}

\usepackage{amsmath}
\usepackage{amssymb}
\usepackage{mathtools}
\usepackage{amsthm}
\usepackage{graphicx}

\usepackage[capitalize,noabbrev]{cleveref}


\usepackage{amsmath,amsfonts,bm}









\def\eqref#1{equation~\ref{#1}}









\def\1{\bm{1}}








\def\vd{{\bm{d}}}

\def\vh{{\bm{h}}}

\def\vk{{\bm{k}}}
\def\vl{{\bm{l}}}

\def\vz{{\bm{z}}}



\def\mX{{\bm{X}}}

\DeclareMathAlphabet{\mathsfit}{\encodingdefault}{\sfdefault}{m}{sl}
\SetMathAlphabet{\mathsfit}{bold}{\encodingdefault}{\sfdefault}{bx}{n}


\def\gC{{\mathcal{C}}}
\def\gD{{\mathcal{D}}}
\def\gE{{\mathcal{E}}}

\def\gG{{\mathcal{G}}}

\def\gL{{\mathcal{L}}}

\def\gV{{\mathcal{V}}}

\def\gX{{\mathcal{X}}}


\def\sC{{\mathbb{C}}}


\def\sR{{\mathbb{R}}}

\def\sX{{\mathbb{X}}}










\DeclareMathOperator*{\argmax}{arg\,max}

\theoremstyle{plain}
\newtheorem{theorem}{Theorem}[section]
\newtheorem{proposition}[theorem]{Proposition}
\newtheorem{lemma}[theorem]{Lemma}

\theoremstyle{definition}
\newtheorem{definition}[theorem]{Definition}

\theoremstyle{remark}

\usepackage[textsize=tiny]{todonotes}
\usepackage[version=4]{mhchem}
\usepackage{colortbl}
\usepackage{multirow}

\def\method{CP-Composer}

\icmltitlerunning{Zero-Shot Cyclic Peptide Design via Composable Geometric Constraints}

\begin{document}

\twocolumn[
\icmltitle{Zero-Shot Cyclic Peptide Design via Composable Geometric Constraints}



\icmlsetsymbol{equal}{*}

\begin{icmlauthorlist}
\icmlauthor{Dapeng Jiang}{air,xingjian,equal}
\icmlauthor{Xiangzhe Kong}{air,cs,equal}
\icmlauthor{Jiaqi Han}{stf,equal}
\icmlauthor{Mingyu Li}{air}
\icmlauthor{Rui Jiao}{air,cs}
\icmlauthor{Wenbing Huang}{ruc,lab}
\icmlauthor{Stefano Ermon}{stf}
\icmlauthor{Jianzhu Ma}{air,ee}
\icmlauthor{Yang Liu}{air,cs}
\end{icmlauthorlist}

\icmlaffiliation{cs}{Department of Computer Science and Technology, Tsinghua University}
\icmlaffiliation{air}{Institute for
AI Industry Research (AIR), Tsinghua}
\icmlaffiliation{ee}{Department of Electronic Engineering, Tsinghua University}
\icmlaffiliation{xingjian}{Xingjian College, Tsinghua University}
\icmlaffiliation{ruc}{Gaoling School of Artificial Intelligence, Renmin University of China}
\icmlaffiliation{lab}{Beijing Key Laboratory of Research on Large Models and Intelligent Governance}
\icmlaffiliation{stf}{Stanford University}

\icmlcorrespondingauthor{Yang Liu}{liuyang2011@tsinghua.edu.cn
}
\icmlcorrespondingauthor{Jianzhu Ma}{majianzhu@tsinghua.edu.cn}

\icmlkeywords{Machine Learning, ICML}

\vskip 0.3in
]



\printAffiliationsAndNotice{\icmlEqualContribution} 

\begin{abstract}



Cyclic peptides, characterized by geometric constraints absent in linear peptides, offer enhanced biochemical properties, presenting new opportunities to address unmet medical needs.
However, designing target-specific cyclic peptides remains underexplored due to limited training data.
To bridge the gap, we propose \method{}, a novel generative framework that enables zero-shot cyclic peptide generation via composable geometric constraints.
Our approach decomposes complex cyclization patterns into unit constraints, which are incorporated into a diffusion model through geometric conditioning on nodes and edges.
During training, the model learns from unit constraints and their random combinations in linear peptides, while at inference, novel constraint combinations required for cyclization are imposed as input.
Experiments show that our model, despite trained with linear peptides, is capable of generating diverse target-binding cyclic peptides, reaching success rates from 38\% to 84\% on different cyclization strategies.

\end{abstract}

\section{Introduction}

Peptides occupy an intermediate position between small molecules and antibodies, offering unique advantages over conventional drug formats, such as higher specificity and enhanced cell permeability~\citep{fosgerau2015peptide, lee2019comprehensive}.
Among them, cyclic peptides, which introduce geometric constraints into linear peptides, have earned increasing attention~\citep{zorzi2017cyclic}.
These constraints stabilize the peptide conformation, enhancing biochemical properties including binding affinity, \textit{in vivo} stability, and oral bioavailability~\citep{ji2024cyclic}, which are essential for identifying desired drug candidates~\citep{zhang2022cyclic}.

\begin{figure}[t!]
    \centering
    \includegraphics[width=.87\linewidth]{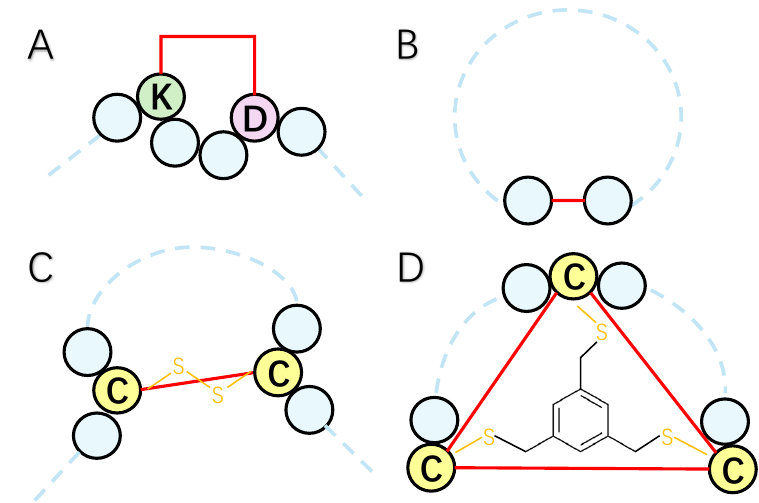}
    \vskip -0.1in
    \caption{Four common strategies to form cyclic peptides. \textbf{(A)} Stapled peptide where a lysine (\texttt{K}) at position $i$ and an aspartic acid (\texttt{D}) at position $i + 3$ are connected via dehydration condensation on side chains. The aspartic acid can also be replaced with glutamic acid (\texttt{E}) at position $i + 4$. \textbf{(B)} Head-to-tail peptide where the first residue and the last residue form an amide bond for connection. \textbf{(C)} Disulfide peptide where two cysteines (\texttt{C}) non-adjacent in sequence are spatially connected through a disulfur bond. \textbf{(D)} Bicycle peptide which uses 1,3,5-trimethylbenezene to form a triangle between three cysteines (\texttt{C}) non-adjacent in sequence.}
    \label{fig:cycpep}
    \vskip -0.2in
\end{figure}

Existing literature on target-specific peptide generation primarily focus on linear peptides, utilizing autoregressive models~\citep{li2024hotspot}, multi-modal flow matching~\citep{li2024full, lin2024ppflow}, and geometric latent diffusion~\citep{PepGLAD}.
However, these methods are not directly applicable to cyclic peptide design due to the scarcity of available data~\citep{rettie2024accurate}.
Other approaches either impose geometric constraints on linear peptides through post-filtering~\citep{wang2024target}, which typically results in low acceptance rates, or rely on hard-coded model design~\citep{rettie2024accurate}, which lacks generalizability across different cyclization patterns.
In contrast, we hypothesize that the complex geometric constraints of cyclic peptides can be decomposed into fundamental unit constraints, resembling how complex mathematical formulas are built from basic arithmetic operations.
While existing datasets rarely contain peptides that satisfy intricate cyclic constraints, they typically include abundant instances of single unit constraints and their random combinations, which serve as the building blocks for more complicated designs.
Therefore, we reason that a framework could potentially be developed to learn these unit constraints from available linear peptide data, circumventing data limitations and enabling generalization to the diverse combined constraints required for cyclic peptide design.

In this paper, we present~\method{}, a framework for zero-shot cyclic peptide generation, relying solely on available data for linear peptides. Our work is equipped with the following contributions.
\textbf{1) Decomposing cyclization strategies into fundamental geometric constraints.}
We identify four common chemical cyclization strategies (Figure~\ref{fig:cycpep}) and formalize cyclic peptide design as a geometrically constrained generation problem.
By analyzing cyclization patterns, we derive two fundamental unit constraints, type constraints and distance constraints, allowing description of diverse cyclization strategies to be specific combinations of these units.
\textbf{2) Encoding constraints with geometric conditioning.}
We incorporate unit constraints into a the denoising network of a diffusion model~\citep{PepGLAD} using additional vectorized embeddings of types and distances on geometric graphs, which enables flexible conditioning on compositions of constraints required for cyclic peptide generation.
\textbf{3) Enabling zero-shot cyclic peptide design.}
We jointly train conditional and unconditional models on unit constraints and their random combinations found in linear peptide data.
At inference, novel constraint combinations corresponding to desired cyclization strategies, which are unseen during training, are imposed as input conditions.
The model is guided by the difference in score estimates between conditional and unconditional models, enabling zero-shot generalization to cyclic peptides.
\textbf{4) Assessing generated cyclic peptides on comprehensive metrics.}
Experiments demonstrate that our CP-Composer generates cyclic peptides with complex geometric constraints effectively, achieving high success rates from 38\% to 84\%, while maintaining realistic distributions on amino acid types and dihedral angles.
Molecular dynamics further confirm that the generated cyclic peptides exhibit desired binding affinity while forming more stable binding conformation compared to the native linear peptide binders.

\section{Related Work}

\textbf{Geometric diffusion models.} Besides their success on applications like image~\cite{rombach2021high,song2020score,song2021denoising} and video~\cite{ho2022video} generation, diffusion models have become a preeminent tool in modeling the distribution of structured data in geometric domains. While early works have explored their applicability on tasks like molecule generation~\cite{xu2022geodiff,xu2023geometric,park2024equivariant}, there have been growing interests in scaling these models to systems of larger scales, such as antibody~\cite{luo2022antigenspecific}, peptide~\cite{PepGLAD}, and protein~\cite{yim2023se,watson2023novo,anand2022protein} in general, or to those with complex dynamics, such as molecular dynamics simulation~\cite{han2024geometric}. Despite fruitful achievements, how to impose diverse geometric constraints stills remain under-explored for geometric diffusion models, which we aim to address in this work.

\textbf{Diffusion guidance.} Diffusion sampling can be flexibly controlled by progressively enforcing guidance through the reverse denoising process.~\citet{dhariwal2021diffusion} proposes classifier-guidance, which employs an additionally trained classifier to amplify the guidance signal. Classifier-free guidance (CFG)~\cite{ho2022classifier} is a more widely adopted alternative that replaces the classifier with the difference of the conditional and unconditional score, which has been further generalized to the multi-constraint scenario by composing multiple scores in diffusion sampling~\cite{liu2022compositional,huang2023composer}. Diffusion guidance has also been explore for solving inverse problems on images~\cite{song2024solving,kawar2022denoising,song2021solving}, molecules~\cite{bao2022equivariant}, and PDEs~\cite{jiang2024ode}. Our approach instead extends CFG to compose geometric constraints with application to cyclic peptide design.

\textbf{Peptide design.}
Target-specific peptide design initially relied on physical methods using statistical force fields and fragment libraries~\citep{hosseinzadeh2021anchor, swanson2022tertiary}.
With the rise of equivariant neural networks~\citep{satorras2021n, han2024survey}, geometric deep generative models have emerged. PepFlow~\citep{li2024full} and PPFlow~\citep{lin2024ppflow} use multi-modal flow matching, while PepGLAD~\citep{PepGLAD} applies geometric latent diffusion with a full-atom autoencoder.
However, these methods struggle with cyclic peptide design due to limited data. Prior works introduce disulfide bonds via post-filtering~\citep{wang2024target} or enforce head-to-tail cyclization through hard-codedo model design~\citep{rettie2024accurate}.
In contrast, our approach decomposes cyclization into fundamental unit constraints, enabling zero-shot cyclic peptide generation with broad flexibility across diverse patterns.

\begin{figure*}[t!]
    \centering
    \includegraphics[width=0.97\linewidth]{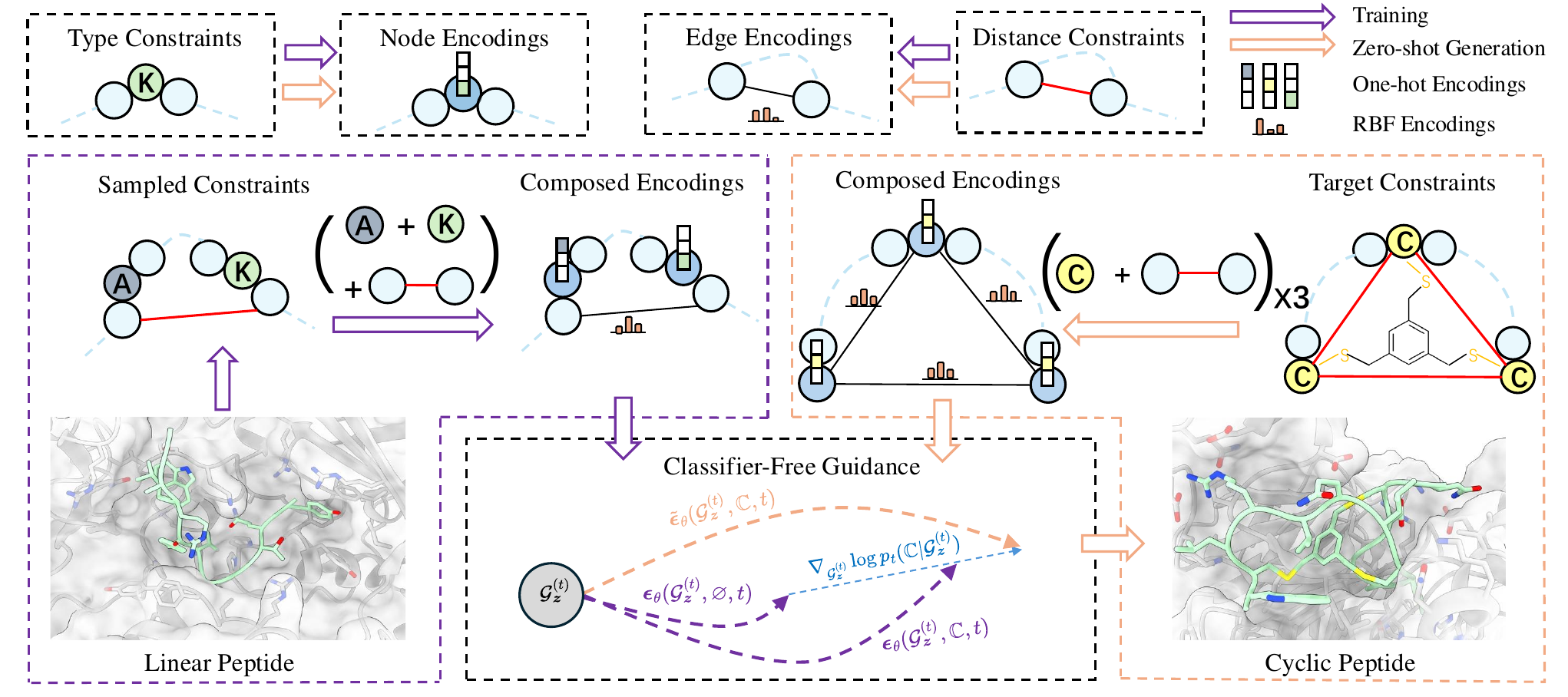}
    \vskip -0.1in
    \caption{\textbf{Overall training and inference design of \method{}.} We define two unit constraints, type constraint and distance constraint (\textsection~\ref{sec:geom-constraints}), which are incorporated into the diffusion model via geometric conditioning (\textsection~\ref{sec:encoding-constraints}). During training, the model learns from single unit constraints and their combinations observed in linear peptides. At inference, novel combinations corresponding to specific cyclization strategies are imposed with guidance signal amplified by classifier-free guidance, enabling zero-shot cyclic peptide design (\textsection~\ref{sec:training-inference}).}
    \vskip -0.1in
    \label{fig:model}
\end{figure*}

\section{Method}

In this section, we detail our method,~\method. We first introduce basic concepts of peptide modeling and cyclic strategies in Sec.~\ref{sec:prelim} and specify these strategies as constraints in Sec.~\ref{sec:geom-constraints}. We further present the guided generation framework and the encoding strategy for incorporating the constraints in Sec.~\ref{sec:diffusion-guidance-constraints} and Sec.~\ref{sec:encoding-constraints}, respectively. We finally describe the training and inference schemes in Sec.~\ref{sec:training-inference}. The overal workflow is depicted in Fig.~\ref{fig:model}.

\subsection{Preliminaries}
\label{sec:prelim}
\textbf{Representing peptide as geometric graph.} We represent the binding site and peptide as a fully-connected geometric graph $\gG=(\gV,\gE)$ where $\gV$ is the set of nodes and $\gE$ is the set of edges. Each node is a residue, binded with node features $(\vh_i,\vec{\mX}_i)$ with $\vh_i\in\sR^{m}$ being the one-hot encoding of the amino acid type and $\vec{\mX}_i\in\sR^{k_i\times 3}$ being the coordinate of the $k_i$ atoms.

\textbf{Geometric latent diffusion model for peptide design.}
Our model is built on PepGLAD~\citep{PepGLAD}, a latent geometric diffusion model, but is adaptable to other diffusion-based frameworks. It employs a variational autoencoder to project peptide graphs $\gG$ into residue-level latents $\gG_\vz=\{(\vz_i,\vec{\vz}_i)  \}_{i=1}^N$ with an encoder $\gE_\phi$, and a corresponding decoder $\gD_\xi$ for the inverse, where $\vz_i\in\sR^8$ is the E(3)-invariant latent and $\vec{\vz}_i\in\sR^3$ is the E(3)-equivariant counterpart.
A diffusion model is learned in the compact latent space, with the denoiser $\bm\epsilon_\theta(\gG^{(t)}_\vz, t)$ parameterized by an equivariant GNN~\cite{kong2023end}. The sampling process initiates with latents $\gG^{(T)}_\vz=\{(\vz^{(T)}_i,\vec{\vz}^{(T)}_i)\}_{i=1}^N$ drawn from the prior and gradually denoises it using DDPM~\cite{ho2020denoising} sampler for a total of $T$ steps. The final latents $\gG^{(0)}_\vz$ are decoded back to the data space using decoder $\gD_\xi$.


\textbf{Cyclic peptide and cyclization strategies.} Unlike common linear peptides, which are chain-like structures, a cyclic peptide is formed by animo acids connected in a ring structure. As shown in Fig.~\ref{fig:cycpep}, we primarily focus on four types of cyclic peptides in this paper: stapled, head-to-tail, disulfide and bicycle peptides. Each strategy applies constraints on specific amino acid types and/or their pairwise distances. Taking the \emph{disulfide peptide} as an example (Fig.~\ref{fig:cycpep}C), to link two cysteines at indices $i,j$ with a disulfur bond of length $d_S$, a disulfide peptide is constrained by\footnote{For simplicity, we slightly abuse the notation $\|\vec\mX_i - \vec\mX_j\|_2$, where the distance is measured using the nodes' $\ce{C}_\alpha$ coordinates.}
\begin{align}
\nonumber
    \sC_{\text{Disulfide},i,j} = (\{&\argmax(\vh_i)=\argmax(\vh_j)=k_C\},
    \\&\{\|\vec{\mX}_i-\vec{\mX}_j\|_2=d_S\}),
\end{align}
where $k_C$ represents the index of cysteine (C) in the one-hot embeddings. This constraint can be decomposed into two node-level constraints on the amino acid types and one edge-level constraint on the distance. We refer to these as \emph{unit geometric constraints} with further details on these constraints provided in Sec.~\ref{sec:geom-constraints}. We demonstrate that all four cyclic strategies can be expressed as combinations of these unit geometric constraints in Appendix~\ref{app:decomp}.



\subsection{Decomposing Cyclization Strategies as Geometric Constraints}
\label{sec:geom-constraints}
In this work, we consider two types of unit geometric constraints, namely \emph{type constraint} and \emph{distance constraint}. In particular, type constraint operates on node-level by enforcing the node to be of certain type, while distance constraint takes place on edge-level, specifying a pair of nodes to reside at a certain distance. 



\begin{definition}[Type constraint]
A type constraint is a set $\sC_T\coloneqq\{(i, l_i)\}_{i\in\gV_T}$ where each entry $(i,l_i)$ represents that node $i$ should be of type $l_i$, while $\gV_T\subseteq \gV$ is the set of nodes to enforce the type constraint.
\end{definition}

\begin{definition}[Distance constraint]
A distance constraint is a set $\sC_D\coloneqq\{(i,j,d_{ij})\}_{(i,j)\in\gE_D}$  where each element $(i,j,d_{ij})$ represents that node $i$ and $j$ should be positioned at the distance of $d_{ij}$, while $\gE_D\subseteq\gE$ is the set of edges to enforce the distance constraint.
\end{definition}

Notably, our taxonomy of geometric constraints is particularly interesting due to its \emph{completeness}, in the sense that each of the cyclic strategies $\sC$ described in Sec.~\ref{sec:prelim} can be decomposed into combinations of type constraints $\sC_T$ and/or distance constraints $\sC_D$. We defer the detailed explanations to Appendix~\ref{app:decomp}.

\textbf{Problem definition.} We formulate the task of cyclic peptide design as finding the candidate peptides $\gG$ that satisfy constraint $\sC$, where $\sC$ is any one of the four cyclic constraints.

\subsection{Inverse Design with Diffusion Guidance}
\label{sec:diffusion-guidance-constraints}

To perform inverse design, a widely adopted approach is to progressively inject certain guidance term into diffusion sampling towards the design target~\cite{bao2022equivariant,song2023loss}, which share similar spirit as classifier guidance~\cite{dhariwal2021diffusion}. Specifically, at each sampling step $t$, the conditional score is derived by Bayes' rule:
\begin{align}
    \nonumber
    \nabla_{\gG^{(t)}_\vz} \log p_t (\gG^{(t)}_\vz|\sC)&=\nabla_{\gG^{(t)}_\vz} \log p_t (\gG^{(t)}_\vz) \\
    \label{eq:bayes}
    &+ \nabla_{\gG^{(t)}_\vz} \log p_t (\sC|\gG^{(t)}_\vz),
\end{align}
where the last term $\nabla_{\gG^{(t)}_\vz} \log p_t (\sC|\gG^{(t)}_\vz)$ takes the effect as guidance, which can typically be a hand-crafted energy function~\cite{kawar2022denoising,song2024solving} or a pretrained neural network~\cite{dhariwal2021diffusion,bao2022equivariant}.

However, empirically the approach is often demonstrated unfavorable since the guidance term in Eq.~\ref{eq:bayes} is the gradient of neural network, which detriments sample quality due to adversarial effect~\cite{ho2022classifier}. Distinct from the approach above, we propose an alternative that, inspired by classifier-free guidance, guides the sampling by directly composing unconditional and conditional score without  additional gradient terms. In detail, we have,
\begin{align}
    \label{eq:cfg}
    \tilde{\bm\epsilon}_\theta(\gG^{(t)}_\vz,\sC, t)&= (w+1)\bm\epsilon_\theta(\gG^{(t)}_\vz,\sC, t) -w\bm\epsilon_\theta(\gG^{(t)}_\vz, t)
\end{align}
where $w$ is the guidance weight and the guided score $\tilde{\bm\epsilon}_\theta$ will replace $\bm\epsilon_\theta$ for score computation. 
In particular, the rationale of Eq.~\ref{eq:bayes} and Eq.~\ref{eq:cfg} are linked by the following distribution
\begin{align}
    \tilde{p}_t(\mathcal{G}_z^{(t)}|\mathbb{C}) \propto p_t(\mathcal{G}_z^{(t)})  p_t(\mathbb{C}|\mathcal{G}_z^{(t)})^w,
\end{align}
with the corresponding conditional score
{
\begin{align}
\nonumber
&\nabla_{\mathcal{G}_z^{(t)}}\log \tilde{p}_t(\mathcal{G}_z^{(t)}|\mathbb{C}) \\
=& \nabla_{\mathcal{G}_z^{(t)}}\log p_t(\mathcal{G}_z^{(t)}) + w\nabla_{\mathcal{G}_z^{(t)}}\log p_t(\mathbb{C}|\mathcal{G}_z^{(t)}) ,\nonumber \\
\label{eq:111}
\approx &\epsilon_\theta(\mathcal{G}_z^{(t)},t) + w\nabla_{\mathcal{G}_z^{(t)}}\log p_t(\mathbb{C}|\mathcal{G}_z^{(t)}).
\end{align}
}
By further leveraging the relation $\nabla_{\mathcal{G}_z^{(t)}}\log p_t(\mathbb{C}|\mathcal{G}_z^{(t)})=\nabla_{\mathcal{G}_z^{(t)}}\log p_t(\mathcal{G}_z^{(t)}|\mathbb{C})-\nabla_{\mathcal{G}_z^{(t)}}\log p_t(\mathcal{G}_z^{(t)})\approx\epsilon_\theta(\mathcal{G}_z^{(t)},\mathbb{C},t)-\epsilon_\theta(\mathcal{G}_z^{(t)},t)$ into Eq.~\ref{eq:111}, we obtain the expression in Eq.~\ref{eq:cfg}.

Conceptually, Eq.~\ref{eq:bayes} adopts energy-guidance that directly models  $\log p_t(\mathbb{C}|\mathcal{G}_z^{(t)})$ by an externally trained energy function. Eq.~\ref{eq:cfg} instead follows the convention in classifier-free guidance by rewriting $\nabla_{\mathcal{G}_z^{(t)}}\log p_t(\mathbb{C}|\mathcal{G}_z^{(t)})=\nabla_{\mathcal{G}_z^{(t)}}\log p_t(\mathcal{G}_z^{(t)}|\mathbb{C})-\nabla_{\mathcal{G}_z^{(t)}}\log p_t(\mathcal{G}_z^{(t)})\approx\epsilon_\theta(\mathcal{G}_z^{(t)},\mathbb{C},t)-\epsilon_\theta(\mathcal{G}_z^{(t)},t),$ which gives Eq.~\ref{eq:cfg} after simplification.

In recent studies, how to obtain the conditional score $\bm\epsilon_\theta(\gG^{(t)}_\vz,\sC, t)$ still remains unclear. Notably, $\sC$ is a complicated geometric constraint, which is fundamentally different from a class label~\cite{ho2022classifier} or a target value~\cite{bao2022equivariant}, where an embedding (\emph{e.g.}, one-hot for class label) can be readily adopted as the control signal to feed into the denoiser. In the following section, we will introduce our approach to encode type and distance constraint.

\subsection{Encoding Constraints via Geometric Conditioning}
\label{sec:encoding-constraints}

To encode the constraints as control signals, we propose geometric conditioning that embeds the type and distance constraints into the denoiser through vectorization.

\textbf{Conditioning type constraints.} For type constraint $\sC_T=\{ (i,l_i)  \}_{i\in\gV_T}$ where $l_i\in\{0,1,\cdots,K-1  \}$ is the desired node type for node $i$, we operate at node-level by augmenting the E(3)-invariant node feature $\vh_i$ with an additional vector $\vl_i\in\sR^{K}$ which serves as the control signal. This corresponds to the encoding function $f_T(\sC_T)=\{(i,\vl_i)\}_{i\in\gV_T}$ that lifts $l_i$ to the embedding space where
\begin{align}
\label{eq:node-control-signal}
\vl_i = 
\begin{cases} 
\text{One-hot}(l_i) & i\in\gV_T, \\
\bm{0} & i\in\gV \backslash \gV_T.
\end{cases}
\end{align}

Such design of the control signal is simple yet effective, since different type constraints will induce different signal $\vl_i$, thus making the constraints distinguishable to the network. More importantly, for any type constraint, the conditional score $\bm\epsilon_\theta(\gG^{(t)}_\vz, \sC, t)$ obtained by this means still enjoys E(3)-equivariance, since $\vl_i$ is E(3)-invariant.

\textbf{Conditioning distance constraints.} For distance constraint $\sC_D\coloneqq\{(i,j,d_{ij})\}_{(i,j)\in\gE_D}$  where $d_{ij}$ specifies the distance between node $i$ and $j$, we instead design the encoding function as $f_D(\sC_D)=\{(i,j,\vd_{ij})  \}_{(i,j)\in\gE_D}$, where the control signal $\vd_{ij}$ is defined at edge-level:
\begin{align}
\label{eq:edge-control-signal}
\vd_{ij} = 
\begin{cases} 
\text{RBF}(d_{ij}) & (i,j)\in\gE_D, \\
\phi & (i,j)\in\gE \backslash \gE_D.
\end{cases}
\end{align}

Here $\text{RBF}(\cdot)$ is the radial basis kernel that lifts the distance from a scalar to a high-dimensional vector~\cite{schutt2018schnet}, and $\phi$ denotes that the edges not in the set $\gE_D$ will not be featurized. The control signal $\vd_{ij}$ is then viewed as a special type of edge feature, which will be further processed by an additional dyMEAN layer~\cite{kong2023end}, whose input will be the subgraph $(\gV,\gE_D)$ with edge features $\{\vd_{ij}  \}_{(i,j)\in \gE_D}$. More details are deferred to Appendix~\ref{app:edge_impl}. Akin to the analysis for type constraints, our way of encoding distance constraints also preserve the E(3)-equivariance of the conditional score, with proof in Appendix~\ref{app:equiv}.

Moreover, the encoding is also injective, as formally stated in Theorem~\ref{prop:injective}. Such property is crucial for effective guidance since different constraints will be projected as different control signals, always making them distinguishable to the score network.

\begin{theorem}[Injective]
\label{prop:injective}
Both $f_T$ and $f_D$ are injective. That is, $f(\sC^1)=f(\sC^2)$ if and only if $\sC^1=\sC^2$, where $(f,\sC^1,\sC^2)$ can be $(f_T,\sC^1_T,\sC^2_T)$ or $(f_D,\sC^1_D,\sC^2_D)$. Furthermore, their product function $\tilde{f}(\sC_T,\sC_D)\coloneqq(f_T(\sC_T),f_D(\sC_D))$ is also injective.
\end{theorem}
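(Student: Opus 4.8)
The plan is to prove each of the three injectivity claims by exhibiting an explicit decoding map that recovers a constraint from its encoding; exhibiting such a left inverse immediately establishes injectivity. Throughout, all constraints are understood to live on a common fixed geometric graph $\gG=(\gV,\gE)$, and the ``if'' direction of each equivalence ($\sC^1=\sC^2\implies f(\sC^1)=f(\sC^2)$) is trivial since $f$ is a well-defined function, so only the ``only if'' direction requires argument.

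For $f_T$, I would proceed as follows. Given the image $f_T(\sC_T)=\{(i,\vl_i)\}_{i\in\gV_T}$, recover $\gV_T$ as the set of indices occurring in the tuples — equivalently, whether one regards the codomain as indexed by $\gV_T$ or by all of $\gV$ with $\vl_i=\bm0$ on the complement, one has $\gV_T=\{i\in\gV:\vl_i\neq\bm0\}$, using that a one-hot vector in $\sR^K$ is never the zero vector. For each $i\in\gV_T$ recover the label by $l_i=\argmax(\vl_i)$, which is valid because $l\mapsto\text{One-hot}(l)$ is injective on $\{0,\dots,K-1\}$. Hence $f_T(\sC^1_T)=f_T(\sC^2_T)$ forces $\gV^1_T=\gV^2_T$ and $l^1_i=l^2_i$ for all $i$, i.e. $\sC^1_T=\sC^2_T$.

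For $f_D$ the structure is identical: from $f_D(\sC_D)=\{(i,j,\vd_{ij})\}_{(i,j)\in\gE_D}$ one reads off $\gE_D$ directly (edges outside $\gE_D$ carry the null symbol $\phi$ and do not appear), and each distance is recovered by inverting the radial basis map, $d_{ij}=\text{RBF}^{-1}(\vd_{ij})$. This reduces the claim to injectivity of $d\mapsto\text{RBF}(d)$ on the range of admissible interatomic distances, which I expect to be the one genuinely nontrivial point. I would settle it for the standard Gaussian expansion $\text{RBF}(d)=\big(\exp(-\gamma(d-\mu_k)^2)\big)_k$ with $\gamma>0$ and at least two distinct centers $\mu_1\neq\mu_2$: the ratio of the two corresponding coordinates equals $\exp$ of an affine function of $d$ whose slope is proportional to $\mu_1-\mu_2\neq0$, hence is strictly monotone and in particular injective in $d$, so $d$ is determined. (A single center would make the map symmetric about $\mu_1$ and non-injective, which is why the concrete featurization enters; if a degenerate kernel were used, the statement should be read on the physically relevant compact interval together with this mild condition on the centers.)

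Finally, for the product $\tilde f(\sC_T,\sC_D)=(f_T(\sC_T),f_D(\sC_D))$, injectivity is automatic once the two coordinate maps are injective: $\tilde f(\sC^1_T,\sC^1_D)=\tilde f(\sC^2_T,\sC^2_D)$ forces $f_T(\sC^1_T)=f_T(\sC^2_T)$ and $f_D(\sC^1_D)=f_D(\sC^2_D)$ componentwise, and the two previous parts give $\sC^1_T=\sC^2_T$ and $\sC^1_D=\sC^2_D$. In short, everything except the injectivity of the RBF featurization is bookkeeping about recovering index sets and inverting a one-hot map.
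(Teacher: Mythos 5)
Your proposal is correct, and it diverges from the paper's proof in one substantive place. The bookkeeping part — recovering the index sets and the labels — is the same in spirit as the paper's auxiliary lemma (which abstracts it as: if $g$ is injective, then $\sX=\{(i,\vk_i)\}_{i\in\gV_\sX}\mapsto\{(i,g(\vk_i))\}_{i\in\gV_\sX}$ is injective), though you phrase it as an explicit left inverse and, usefully, observe that a one-hot vector is never $\bm{0}$, which is exactly what is needed to separate $\gV_T$ from its complement under the encoding of Eq.~\ref{eq:node-control-signal}; the paper's lemma sidesteps this by treating the encoding as a set indexed only over $\gV_\sX$. The genuine difference is the RBF step. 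The paper takes $\text{RBF}$ to be the infinite-dimensional feature map $\phi$ of the Gaussian kernel and computes $\|\phi(d_1)-\phi(d_2)\|^2=2-2\exp(-\gamma\|d_1-d_2\|^2)$, which proves injectivity of the idealized map but says nothing about the truncated, finite-channel featurization actually used (the paper acknowledges the truncation only in a footnote). You instead work directly with the finite Gaussian expansion $\text{RBF}(d)=(\exp(-\gamma(d-\mu_k)^2))_k$ and deduce injectivity from the ratio of two coordinates with distinct centers being $\exp$ of an affine, strictly monotone function of $d$; this covers the implemented $32$-channel encoder and makes explicit the mild but necessary hypothesis that at least two centers differ (a single-center expansion is indeed non-injective, as you note). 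The paper's argument is shorter and kernel-theoretically clean; yours is the one that actually applies to the finite-dimensional map used in practice. The product-map step is identical in both.
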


\textbf{Composing type and distance constraints.} Our approach of encoding the type and distance constraints in node- and edge-level respectively also facilitates conveniently composing them together. In particular, we can easily devise $\bm\epsilon_\theta(\gG^{(t)}_\vz,\sC_T,\sC_D, t)$ by simultaneous feeding the type and distance control signals in Eq.~\ref{eq:node-control-signal} and~\ref{eq:edge-control-signal} into the score network, which corresponds to enforcing a compositional constraint $(\sC_T,\sC_D)$. This extension is critical since it enables us to enforce richer combinations of the constraints at inference time, even generalizing to those unseen during training. In this way, we are able to design cyclic peptides with training data that only consist of linear peptides due to the generalization capability of our approach.


\subsection{Training and Inference}
\label{sec:training-inference}

With the geometric conditioning technique to derive the conditional score, we are now ready to introduce the training and inference framework.

\begin{algorithm}[t!]
\caption{Training Procedure of~\method}
\label{alg:training}
\textbf{Input:} Data distribution $\gD$, mask probabilities for type and distance constraints $p_T, p_D$, encoder $\gE_\phi$, score network $\bm\epsilon_\theta$, diffusion scheduler $\texttt{Scheduler}(\cdot)$
\begin{algorithmic}[1] 
\WHILE {not converged}
\STATE {Sample $\gG\sim \gD, \sC_T\sim \text{Unif}(\gC_T(\gG)),$ \\and $\sC_D\sim\text{Unif}(\gC_D(\gG))$ \ \ \ \ \ \ \ \ \ \ \ \ \ \ \ \ \ \  \COMMENT{c.f. Eq.~\ref{eq:type-constraint-space}-\ref{eq:distance-constraint-space}}}
\STATE $\sC_T\leftarrow\varnothing$ with probability $p_T$
\STATE $\sC_D\leftarrow\varnothing$ with probability $p_D$
\STATE $(\bm\epsilon, \gG_\vz^{(t)}, t)\leftarrow \texttt{Scheduler}(\gE_\phi(\gG))$
\STATE {Take gradient step on \\ \ \ \ $\gL(\theta)=\|\bm\epsilon - \bm\epsilon_\theta(\gG^{(t)}_\vz,\sC_T,\sC_D, t) \|_2^2$}
\ENDWHILE
\end{algorithmic}
\end{algorithm}

\textbf{Design space for constraints.} For a linear peptide $\gG$ sampled from training set with features $\{ 
(\vh_i,\vec\mX_i)\}_{i=1}^{N}$, we consider the following design space for type constraint:
\begin{align}
\label{eq:type-constraint-space}
    \gC_T(\gG) = \{\sC_T | \sC_T=\{(i,\argmax(\vh_i)  \}_{i\in\gV_T}, |\gV_T|\leq 4  \},
\end{align}
which include all of the type constraints that control the type of the node to be the same as that of node $i$ in $\gG$ and the number of constraints to be fewer or equal to 4. For distance constraint, we select the following design space:
\begin{align}
\nonumber
    \gC_D(\gG) = \{\sC_D|&\sC_D=\{    (i,j,\|\vec\mX_i-\vec\mX_j  \|_2)\}_{(i,j)\in\gE_D} ,\\
    \label{eq:distance-constraint-space}
    &d_\gG(i,j)\in\{3,4,6\},|\gE_D|\leq 6 \},
\end{align}
which spans across all possible distance constraints that specify the distance between node $i$ and $j$ to be their Euclidean distance in $\gG$, while the shortest path distance between $i$ and $j$, \emph{i.e.}, $d_\gG(i,j)$, equals to $3$, $4$, or $6$. We design $\gC_T(\gG)$ and $\gC_D(\gG)$ such that  $\gC_T(\gG)\times\gC_D(\gG)$ covers the constraint space of cyclic peptides, where $\times$ is the Cartesian product. This permits our approach to generalize to novel compositions within the space  $\gC_T(\gG)\times\gC_D(\gG)$ at inference time without necessarily seeing such particular combination in training data, \emph{e.g.}, the four compositional constraints of cyclic peptides.

\textbf{Training.} We employ a single network $\bm\epsilon_\theta$ to jointly optimize the conditional and unconditional score during training, following the paradigm in~\citet{ho2022classifier}. At each training step, we first sample $\gG$ from training data distribution $\gD$ and derive the candidate constraints $\gC_T(\gG)$ and $\gC_D(\gG)$. We then sample a type constraint $\sC_T$ and a distance constraint $\sC_D$ uniformly from the candidates $\gC_T(\gG)$ and $\gC_D(\gG)$, respectively. To jointly optimize the conditional and unconditional score networks, we replace $\sC_T$ and $\sC_D$ by empty set $\varnothing$ with probability $p_T$ and $p_D$ respectively, where the empty set will enforce no meaningful type and/or distance control signal which degenerates to the unconditional score. Finally, we encode $\gG$ into latent space by $\gE_\phi$, sample the noise $\bm\epsilon$ and diffusion step $t$, and compute the noised latent $\gG_\vz^{(t)}$. The noise prediction loss~\cite{ho2020denoising} is adopted to train the score network. We present the detailed training procedure in Alg.~\ref{alg:training}.

\begin{algorithm}[t!]
\caption{Inference Procedure of \method}
\label{alg:inference}
\textbf{Input:} Target type and distance constraint  $(\sC^\ast_T,\sC^\ast_D)$, diffusion sampler $\texttt{Sampler}(\cdot)$, guidance weight $w$, step $T$, score network $\bm\epsilon_\theta$,  decoder $\gD_\xi$ 
\begin{algorithmic}[1] 
\STATE Initialize latents $\gG^{(T)}_\vz$ from prior
\FOR{$t=T, T-1, \cdots, 1$}
\STATE {Compute score $\tilde{\bm\epsilon}\leftarrow(w+1)\bm\epsilon_\theta(\gG^{(t)}_\vz,\sC^\ast_T, \sC^\ast_D,t)-w\bm\epsilon_\theta(\gG^{(t)}_\vz,\varnothing,\varnothing,t)$ 
\COMMENT{Eq.~\ref{eq:our-guidance-score}}}
\STATE $\gG_\vz^{(t-1)}\leftarrow\texttt{Sampler}(\gG_\vz^{(t)},\tilde{\bm\epsilon}, t)$\ \ \ \ \ \COMMENT{Denoising step}
\ENDFOR
\end{algorithmic}
\textbf{Return:} $\gD_\xi(\gG_\vz^{(0)})$
\end{algorithm}

\begin{table*}[t!]
\vskip -0.1in
\centering
\small
\caption{Success rates and KL divergence for generated samples from different cyclization strategies.}
\label{table:main}
\setlength{\tabcolsep}{7pt}
\begin{tabular}{lccccccccc}
\toprule
\multirow{2}{*}{}                   & \multicolumn{4}{c}{Stapled peptide}         &  & \multicolumn{4}{c}{Head-to-tail peptide}    \\ \cmidrule{2-5}\cmidrule{7-10} 
              & Succ.            & AA-KL  & B-KL   & S-KL   &  & Succ.            & AA-KL  & B-KL   & S-KL   \\ \midrule
PepGLAD~\cite{PepGLAD}            & 22.80\%          & 0.1035 & 1.1401 & 0.0126 &  & 30.23\%          & 0.1052 & 1.1347 & 0.0125 \\
w/ EG~\cite{bao2022equivariant}    & 25.41\%          & 0.0744 & 1.1821 & 0.0127 &  & 61.63\%          & 0.0798 & 1.0891 & 0.0128 \\ \midrule
CP-Composer $w=0.0$  & 25.71\%          & 0.0932 & 1.1179 & 0.0126 &  & 37.21\%          & 0.1021 & 1.0787 & 0.0118 \\
CP-Composer $w=1.0$  & 30.00\%          & 0.1017 & 1.1235 & 0.0161 &  & 55.81\%          & 0.1008 & 1.0604 & 0.0124 \\
CP-Composer $w=2.0$  & 21.42\%          & 0.1067 & 1.0996 & 0.0147 &  & 65.11\%          & 0.1055 & 1.1005 & 0.0126 \\
+CADS~\cite{cads}      & 27.14\%           & 0.0807 & 1.0975 & 0.0119 & & 45.54\%              & 0.0798 & 1.0589 & 0.0132 \\
CP-Composer $w=5.0$  & \textbf{38.57\%} & 0.1812 & 1.1515 & 0.0180 &  & \textbf{74.42\%} & 0.1320 & 1.0523 & 0.0122 \\
CP-Composer $w=10.0$ & 32.86\%          & 0.3532 & 1.1726 & 0.0232 &  & 68.60\%          & 0.1784 & 1.0301 & 0.0175 \\ \midrule
\multirow{2}{*}{}                   & \multicolumn{4}{c}{Disulfide peptide}        &  & \multicolumn{4}{c}{Bicycle peptide}         \\ \cmidrule{2-5}\cmidrule{7-10} 
                   & Succ.            & AA-KL  & B-KL   & S-KL   &  & Succ.            & AA-KL  & B-KL   & S-KL   \\ \midrule
PepGLAD~\cite{PepGLAD}            & 0                & 0.0808 & 1.1324 & 0.0124 &  & 0                & 0.0838 & 1.1823 & 0.0238 \\
w/ EG~\cite{bao2022equivariant}     & 0                & 0.0711 & 1.0891 & 0.0103 &  & 0                & 0.0729 & 1.0968 & 0.0228 \\ \midrule
CP-Composer $w=0.0$  & \hphantom{0}7.50\%           & 0.1016 & 1.1062 & 0.0151 &  & 0                & 0.1225 & 1.1980 & 0.0252 \\
CP-Composer $w=1.0$  & 21.25\%          & 0.1477 & 1.0939 & 0.0151 &  & 11.53\%          & 0.1638 & 1.1490 & 0.0395 \\
CP-Composer $w=2.0$  & 41.25\%          & 0.2873 & 1.0994 & 0.0379 &  & 30.76\%          & 0.2147 & 1.1195 & 0.0735 \\
+CADS~\cite{cads}      & 3.75\%            & 0.0939 & 1.0788 & 0.0162& & 3.85\%               & 0.0901 & 1.0624 & 0.0684 \\
CP-Composer $w=5.0$  & \textbf{82.50\%} & 0.5139 & 1.0397 & 0.1913 &  & \textbf{84.62\%} & 0.3385 & 1.0759 & 0.3351 \\
CP-Composer $w=10.0$ & 62.50\%          & 1.6965 & 4.0312 & 1.1046 &  & 38.46\%          & 1.2677 & 8.1935 & 0.3374 \\ 
\bottomrule
\end{tabular}
\vskip -0.1in
\end{table*}

\textbf{Inference.} At inference time, we will select one of the four cyclic constraints at one time. Each constraint is represented by $(\sC^\ast_T,\sC^\ast_D)$ where $\sC^\ast_T$ and $\sC^\ast_D$ are the target type and distance constraint, respectively. We start from the initial latent $\gG_\vz^{(T)}$ sampled from the prior and perform standard diffusion sampling with the guided score:
\begin{align}
\nonumber
\tilde{\bm\epsilon}(\gG^{(t)}_\vz,\sC^\ast_T, \sC^\ast_D,t)=&(w+1)\bm\epsilon_\theta(\gG^{(t)}_\vz,\sC^\ast_T, \sC^\ast_D,t)\\
\label{eq:our-guidance-score}
&-w\bm\epsilon_\theta(\gG^{(t)}_\vz,\varnothing,\varnothing,t),
\end{align}
where a modified classifier-free guidance is employed to further amplify the guidance signal.
The sample is acquired by decoding $\gG_\vz^{(0)}$ back to the data space using the decoder $\gD_\xi$. The inference procedure is depicted in Alg.~\ref{alg:inference}.


\section{Experiments}

\textbf{Task.} We evaluate \method{} on target-specific cyclic peptide design, aiming to co-design the sequence and the binding structure of cyclic peptides given the binding site on the target protein.

\textbf{Dataset.}
We utilize PepBench and ProtFrag datasets~\citep{PepGLAD} for training and validation, with the LNR dataset~\citep{PepGLAD, tsaban2022harnessing} for testing.
\textbf{PepBench} contains 4,157 protein-peptide complexes for training and 114 complexes for validation, with a target protein longer than 30 residues and a peptide binder between 4 to 25 residues.
\textbf{ProtFrag} encompasses 70,498 synthetic samples resembling protein-peptide complexes, which are extracted from local contexts in protein monomers.
\textbf{LNR} consists of 93 protein-peptide complexes curated by domain experts, with peptide lengths ranging from 4 to 25 residues.

We evaluate zero-shot cyclic peptide generation in Sec.~\ref{sec:exp_gen}, demonstrate the flexibility of composable geometric constraints with high-order multi-cycle constraints in Sec.~\ref{sec:flex_gen}, and assess the stability and binding affinity of the generated cyclic peptides through molecular dynamics in Sec.~\ref{sec:md}.

\subsection{Zero-Shot Cyclic Peptide Generation}
\label{sec:exp_gen}
\textbf{Metrics.}
We evaluate the generated peptides based on two key aspects: cyclic constraint satisfaction and generation quality.
For each target protein in the test set, we generate five candidate peptides and compute the following metrics.
\textbf{Success Rate (Succ.)} measures the proportion of target proteins for which at least one of the five generated peptides satisfies the geometric constraints of the specified cyclization strategy.
\textbf{Amino Acid Divergence (AA-KL)} calculates the Kullback–Leibler (KL) divergence between the amino acid composition of reference peptides and all of the generated samples. For cyclization patterns that impose amino acid constraints at specific positions, we exclude these constrained amino acid types when computing the distributions, as successful designs inherently deviate from the reference distribution on these amino acid types.
\textbf{Backbone Dihedral Angle Divergence (B-KL)} and \textbf{Side-Chain Dihedral Angle Divergence (S-KL)} indicate the KL divergence between the distribution of the dihedral angles in reference peptides and the generated samples, assessing rationality in the generated backbone and side chains, respectively.

\textbf{Baselines.} 
First, we compare our \method{} with the backbone model PepGLAD~\citep{PepGLAD} without additional guidance to validate the effectiveness of our framework with composable geometric constraints.
We further implement a baseline with the prevailing Energy-based Guidance (EG)~\cite{dhariwal2021diffusion,bao2022equivariant} applied to node embeddings and pairwise distances to assess the advantages of our approach, with implementation details in Appendix~\ref{app:impl}. To compare \method{} with other cyclic peptide generation method, we implement DiffPepBuilder~\cite{diffpepbuilder}, a model specifically designed for disulfide peptides. Furthermore, we also incorperate our method with a advanced sampler Condition Annealed Diffusion Sampler(CADS)~\cite{cads} to analysis the performance of our method combining with other sampler.

\begin{figure}[t]
    \centering
    \includegraphics[width=0.43\textwidth]{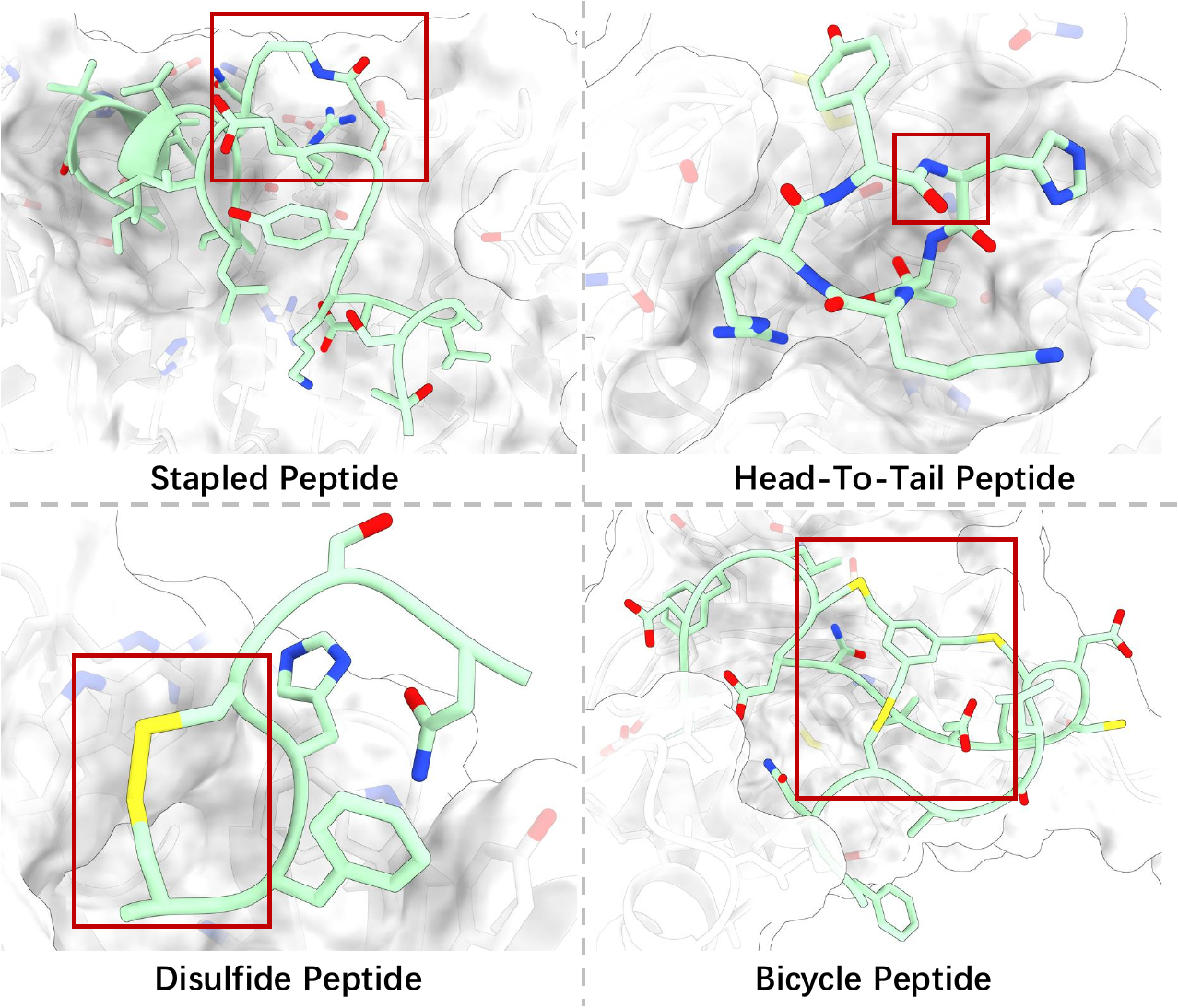}
        \vskip -0.1in
    \caption{Four types of generated cyclic peptides, with the red boxes highlighting the position for cyclization.}
    \label{fig:S1}
    \vskip -0.2in
\end{figure}

\textbf{Results.}
As shown in Table~\ref{table:main}, \method{} significantly improves constraint satisfaction rates across all cyclization strategies compared to unguided baselines, while maintaining fidelity to reference distributions in amino acid composition and structural dihedral angles.
The energy-guided baseline proves effective in simple cases requiring control over a single pairwise distance (i.e., head-to-tail cyclization), but struggles with more complex scenarios involving combinations of distance constraints and type constraints.
This limitation is evident from its lower success rates on stapled peptides and complete failure in handling more intricate cyclization patterns including disulfide and bicycle peptides.
In contrast, \method{} consistently achieves high success rates across these challenging cases, demonstrating the strength of our framework design with composable geometric constraints. In Table~\ref{table:diffpepbuilder}, we further compare \method{} with DiffPepBuilder~\cite{diffpepbuilder}. Although DiffPepbuilder is a method specifically designed for disulfide peptide generation, \method{} shows a better success rates than DiffPepbuilder. These results show the effectiveness of \method{}.
We visualize examples of generated peptides for each cyclization strategy in Fig.~\ref{fig:S1}, with more cases in Appendix~\ref{app:cases}.
Furthermore, the weight parameter $w$ effectively balances success rates and generation quality, with increasing control strength yielding higher constraint satisfaction yet slightly higher KL divergence, indicating a trade-off between constraint satisfaction and distributional fidelity.
This flexibility allows users to customize the method based on specific application needs, prioritizing either higher success rates or closer resemblance to natural peptide distributions. 

\subsection{Flexibility in High-Order Combinations}
\label{sec:flex_gen}

\begin{figure}[htbp]
    \centering
    \includegraphics[width=0.43\textwidth]{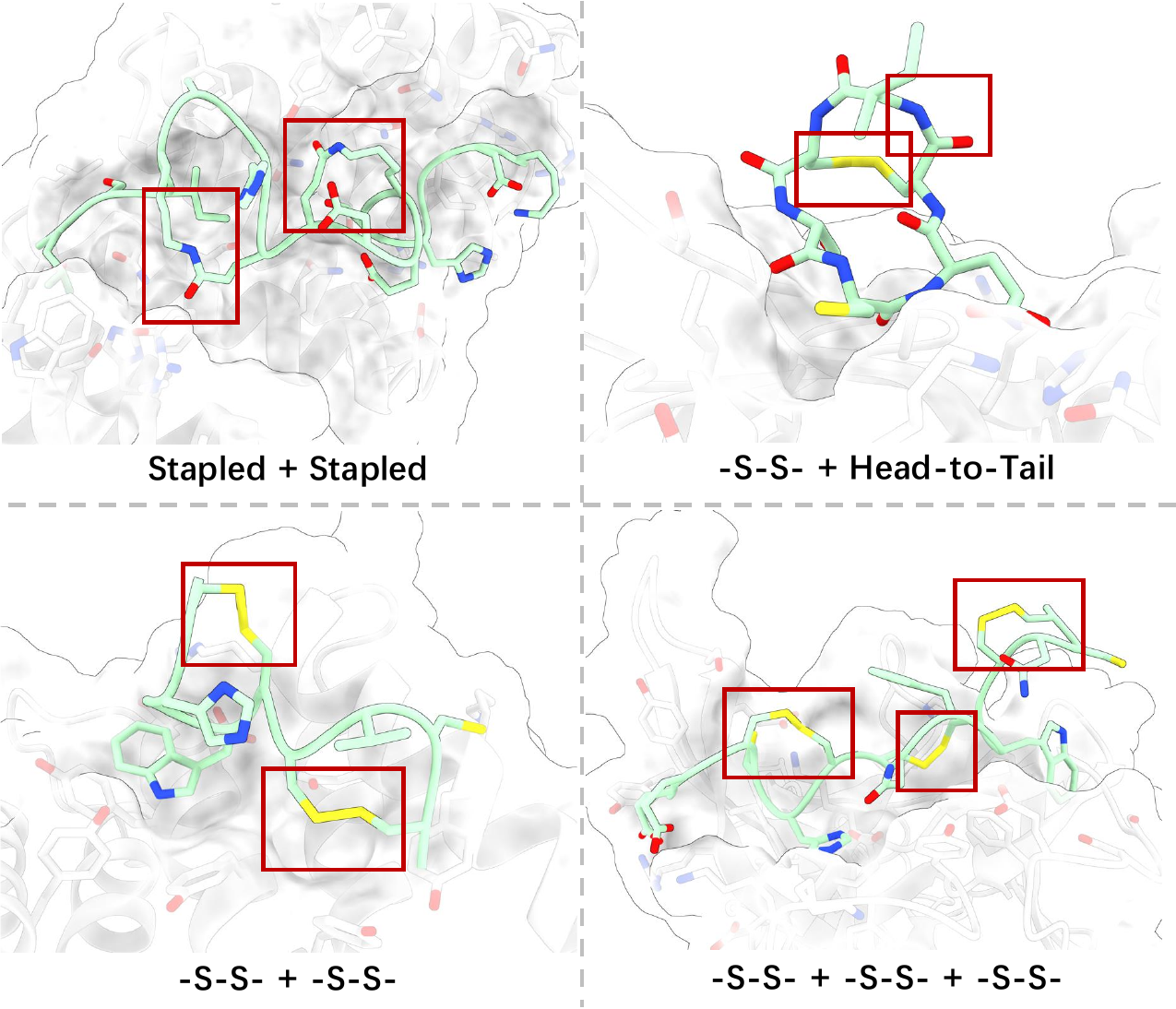}
        \vskip -0.1in
    \caption{Generated peptides conforming to high-order combinations of cyclizations, with the red boxes highlighting the positions for cyclization.}
    \label{fig:S2}
\end{figure}

\begin{figure*}[t!]
    \centering
    \includegraphics[width=0.85\linewidth]{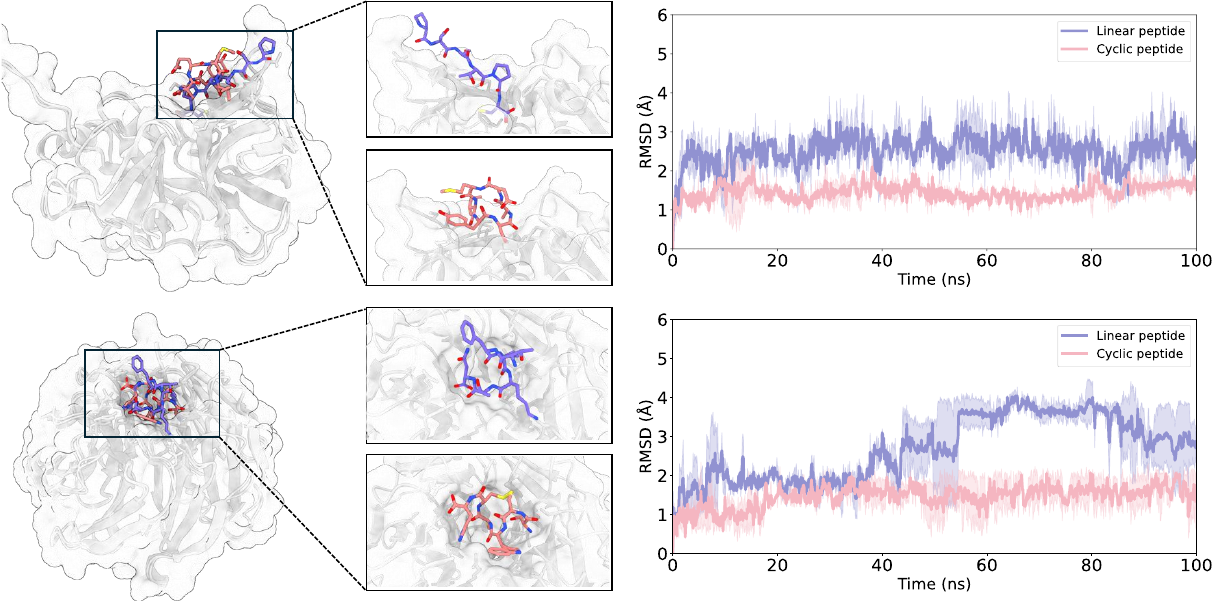}
    \vskip -0.1in
    \caption{RMSD trajectories from 100 ns molecular dynamics simulations for two target proteins, each bound to either a native linear peptide binder or a cyclic peptide generated by our model. The target proteins and their corresponding linear peptide binders are derived from PDB 3RC4 (top) and PDB 4J86 (bottom), respectively.}
    \label{fig:md}
    \vskip -0.2in
\end{figure*}

\textbf{Setup.}
To demonstrate the flexibility of our framework in handling composable geometric constraints, we investigate more complex and customized scenarios that involve multiple cyclizations within a single peptide.
Specifically, we explore the following high-order combinations:
\textbf{2*Stapled} has two stapled pairs in one peptide.
\textbf{-S-S- + H-T} includes one disulfide bond and one head-to-tail in one peptide;
\textbf{2*-S-S-} contains two disulfide bonds in one peptide;
\textbf{3*-S-S-} involves three disulfide bonds in one peptide;
The flexibility of \method{} enables seamless implementation of these complex constraints: simply combining the individual unit constraints for each cyclization strategy allows the model to accommodate them simultaneously.

\textbf{Results.}
As shown in Table~\ref{table:combination}, despite the increasing complexity of the constraints, \method{} achieves reasonable success rates across all high-order cyclization scenarios.
The control strength parameter $w$ remains effective, with higher values leading to enhanced success rates.
The only exception is 2*Stapled, likely due to the inherent difficulty of the Staple strategy, which already exhibits the lowest success rate in Table~\ref{table:main}.
This indicates that our framework effectively learns to generate peptides that conform to the joint distribution of multiple constraints.
Fig.~\ref{fig:S2} visualizes peptides with these high-order cyclization patterns, highlighting the flexibility of \method{} in designing structurally feasible peptides tailored for customized requirements.


\begin{table}[htbp]
\centering
\small
\caption{Success rates for high-order combinations of multiple cyclizations within the same peptide.}
\label{table:combination}
\resizebox{0.48\textwidth}{!}{
\begin{tabular}{lcccc}
\toprule
                  & 2*Stapled      & -S-S-+H-T       & 2*-S-S-       & 3*-S-S- \\ \midrule
$w=1.0$ & 2.5\%          & 0               & 0               & 0                              \\
$w=2.0$ & \textbf{7.5\%} & 10.0\%          & 26.0\%          & 17.2\%                         \\
$w=2.5$ & \textbf{7.5\%} & 20.0\%          & 34.0\%          & 34.5\%                         \\
$w=3.0$ & \textbf{7.5\%} & \textbf{26.0\%} & \textbf{62.0\%} & \textbf{65.5\%}                \\ \bottomrule
\end{tabular}
}
\end{table}

\begin{table}[h]
\centering
\caption{Success rates comparison between DiffPepBuilder and our method}
\label{table:diffpepbuilder}
\resizebox{0.48\textwidth}{!}{
\begin{tabular}{lll}
\toprule
Succ.           & Disulfide Peptide & 2*-S-S- \\ \midrule
CP-Composer           & 41.25\%           & 62.00\% \\
DiffPepBuilder~\cite{diffpepbuilder} & 23.07\%           & 32.78\% \\ \bottomrule
\end{tabular}}
\end{table}

In Table~\ref{table:diffpepbuilder}, we compare \method{} with DiffPepBuilder. The results show that our method outperform the cycplic peptide generation model under high-order cyclization scenario: two disulfide bonds in one peptide. This indicates the flexibility of our framework.

\subsection{Evaluations by Molecular Dynamics}
\label{sec:md}

\textbf{Setup.}
We perform molecular dynamics (MD) simulations using the Amber22 package~\citep{salomon2013routine} to compare the stability and binding affinity of linear peptides from the test set with cyclic peptides generated by our model. We use the ff14SB force field for proteins and peptides~\citep{maier2015ff14sb} with all systems solvated in water, and 150 $nM$ $\ce{Na}^+$/$\ce{Cl}^-$ counterions are added to neutralize charges and simulate the normal saline environment~\citep{jorgensen1983comparison, li2024delineating}. 
The SHAKE algorithm is applied to constrain covalent bonds involving hydrogen atoms~\citep{ryckaert1977numerical}, while non-bonded interactions are truncated at 10.0 \AA, with long-range electrostatics treated using the PME method.
To estimate peptide binding energies, we further employ MM/PBSA calculations~\citep{genheden2015mm}.
Notably, while MD simulations provide high accuracy in evaluating conformational stability and binding affinity, they are very computationally expensive.
Therefore, we randomly select two target proteins from the test set and generate one cyclic peptide using head-to-tail and disulfide bond cyclization strategies for evaluation. More details on the setup of MD are in Appendix~\ref{app:md-detail}.

\begin{table}[t!]
\centering
\caption{RMSD trajectories from molecular dynamics after 50 ns (average values and standard deviations), along with binding affinities ($\Delta G$) estimated by running simulations with MM/PBSA.}
\label{tab:md}
\scalebox{0.92}{
\begin{tabular}{ccc}
\toprule
Peptide           & RMSD (\AA) & $\Delta G$-MM/PBSA (kcal/mol) \\ \hline
\rowcolor{black!5}\multicolumn{3}{c}{\texttt{PDB: 3RC4}}                                                   \\ \hline
Linear (test set) & 2.57$\pm$0.51             & -9.73                           \\
Cyclic (ours)     & \textbf{1.44$\bm{\pm}$0.23}             & \textbf{-10.66}                          \\ \hline
\rowcolor{black!5}\multicolumn{3}{c}{\texttt{PDB: 4J86}}                                                   \\ \hline
Linear (test set) & 3.37$\pm$0.73             & -15.17                          \\
Cyclic (ours)     & \textbf{1.56$\bm{\pm}$0.40}             & \textbf{-20.41}                          \\ \bottomrule
\end{tabular}
}
\end{table}

\textbf{Results.}
As shown in Fig.~\ref{fig:md}, the root mean square deviation (RMSD) trajectories of the two linear peptides from the test set exhibit significant fluctuations, indicating vibrate binding conformations.
In contrast, the RMSD trajectories of the cyclic peptides generated by our model are quite flat, producing consistently lower RMSD compared to the linear peptides, suggesting that the introduced geometric constraints effectively enhance conformational stability.
Table~\ref{tab:md} presents the average RMSD values with standard deviations, along with the binding affinity ($\Delta G$) estimated via MM/PBSA simulations.
The results indicate that cyclic peptides achieve significantly stronger binding affinities than their linear counterparts, thanks to their enhanced stability in the binding conformations.

\subsection{Generalization beyond Available Data}
In Fig.~\ref{fig:cluster}, we visualize the structural embeddings of peptides generated under different cyclization strategies, along with linear peptides from the test set, using ESM2-650M\citep{lin2023evolutionary} and T-SNE~\citep{van2008visualizing}. The results reveal distinct clusters corresponding to different cyclization strategies, all of which are clearly separated from the linear peptides. This indicates that \method{} generalizes well beyond the available data, effectively exploring unseen regions of cyclic peptides.

\begin{figure}[htbp]
    \centering
    \includegraphics[width=0.38\textwidth]{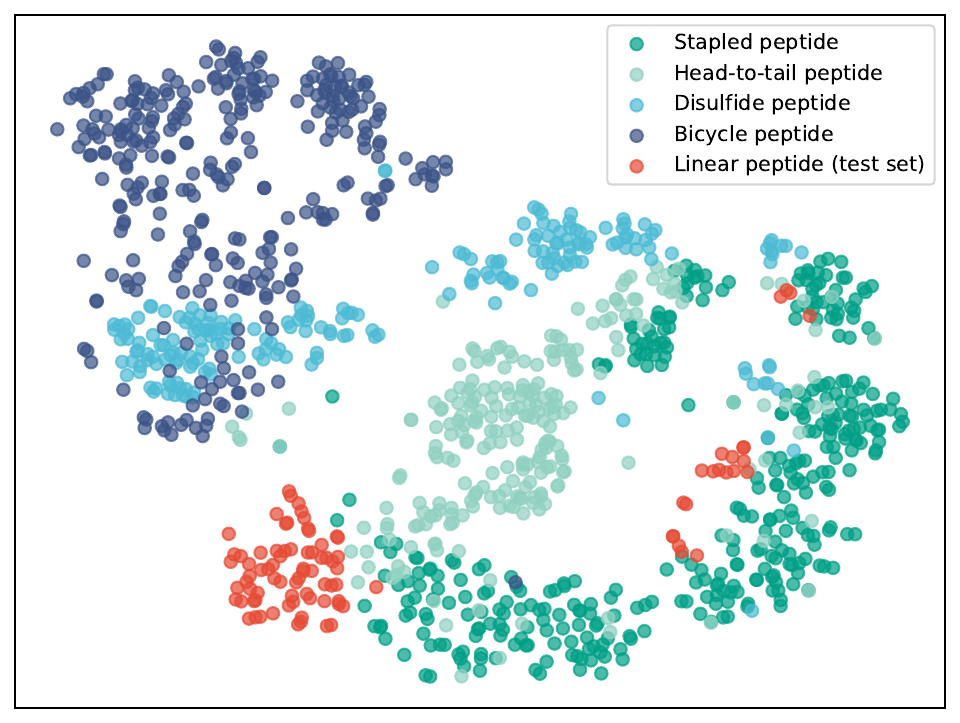}
    \vskip -0.1in
    \caption{T-SNE visualization of ESM embeddings for peptides in the test set and those generated with different cyclization strategies.}
    \label{fig:cluster}
\end{figure}


\section{Conclusion}
We introduce \method{}, a generative framework that enables zero-shot cyclic peptide design  via composable geometric constraints.
By decomposing complex cyclization patterns into unit constraints, it circumvents the limitation of data, achieves high success rates while preserving fidelity to natural distributions of type and structural statistics, and allows for high-order combinations of cyclization patterns, enabling the design of multi-cycle peptides with customizable strategies.
Our framework offers a principled approach to cyclic peptide design, with potential extensions to broader biomolecular applications involving geometric constraints.

\section*{Acknowledgements}
This work is jointly supported by the National Key R\&D Program of China (No.2022ZD0160502), the National Natural Science Foundation of China (No. 61925601, No. 62376276, No. 62276152), Beijing Nova Program (20230484278), China's Village Science and Technology City Key Technology funding, Beijing Natural Science Foundation (No. QY24249) and Wuxi Research Institute of Applied Technologies.

\section*{Impact Statement}
This paper presents work whose goal is to advance the field of Machine Learning. There are many potential societal consequences of our work, none which we feel must be specifically highlighted here.


\bibliography{reference}
\bibliographystyle{icml2025}

\newpage
\appendix
\onecolumn
\section{Proofs}
\subsection{Proof of Theorem~\ref{prop:injective}}
\label{app:proof-injective}
For clarity, we restate Theorem~\ref{prop:injective} below.
\begingroup
\def\thetheorem{\ref{prop:injective}}
\begin{proposition}[Injective]
Both $f_T$ and $f_D$ are injective. That is, $f(\sC^1)=f(\sC^2)$ if and only if $\sC^1=\sC^2$, where $(f,\sC^1,\sC^2)$ can be $(f_T,\sC^1_T,\sC^2_T)$ or $(f_D,\sC^1_D,\sC^2_D)$. Furthermore, their product function $\tilde{f}(\sC_T,\sC_D)\coloneqq(f_T(\sC_T),f_D(\sC_D))$ is also injective.
\end{proposition}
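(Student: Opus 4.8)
The statement asks for three injectivity claims: $f_T$ is injective, $f_D$ is injective, and the product map $\tilde f = (f_T, f_D)$ is injective. The plan is to prove the first two directly from the definitions of the encoding functions in Eq.~\eqref{eq:node-control-signal} and Eq.~\eqref{eq:edge-control-signal}, and then deduce the third as an immediate consequence of the first two together with the fact that a product of injective maps is injective on the product domain.

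\textbf{Injectivity of $f_T$.} Suppose $f_T(\sC^1_T) = f_T(\sC^2_T)$, where $\sC^a_T = \{(i, l^a_i)\}_{i \in \gV^a_T}$ for $a \in \{1,2\}$. By definition $f_T(\sC^a_T) = \{(i, \vl^a_i)\}_{i \in \gV_T^a}$, but it is cleaner to regard the image as a function on all of $\gV$: node $i$ carries the control signal $\vl^a_i = \text{One-hot}(l^a_i)$ if $i \in \gV^a_T$ and $\vl^a_i = \bm 0$ otherwise. The key observation is that $\text{One-hot}(l) \ne \bm 0$ for every valid type label $l \in \{0,\dots,K-1\}$, so the zero vector $\bm 0$ is never in the range of $\text{One-hot}(\cdot)$. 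First I would argue the two node sets agree: $\gV^1_T = \{i : \vl^1_i \ne \bm 0\} = \{i : \vl^2_i \ne \bm 0\} = \gV^2_T$, using the equality of the control-signal assignments and the fact that $\vl_i \ne \bm 0$ exactly on $\gV_T$. Then for each $i$ in this common set, $\text{One-hot}(l^1_i) = \vl^1_i = \vl^2_i = \text{One-hot}(l^2_i)$, and since $\text{One-hot}$ is itself injective on $\{0,\dots,K-1\}$ we get $l^1_i = l^2_i$. Hence $\sC^1_T = \sC^2_T$. The converse is trivial since $f_T$ is a well-defined function.

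\textbf{Injectivity of $f_D$.} The argument is structurally identical. Suppose $f_D(\sC^1_D) = f_D(\sC^2_D)$ with $\sC^a_D = \{(i,j,d^a_{ij})\}_{(i,j)\in\gE^a_D}$. By Eq.~\eqref{eq:edge-control-signal}, an edge $(i,j)$ carries the control signal $\text{RBF}(d^a_{ij})$ if $(i,j) \in \gE^a_D$ and the null marker $\phi$ otherwise. I would first recover the edge sets by noting $\gE^a_D = \{(i,j) : \vd^a_{ij} \ne \phi\}$, which are equal by hypothesis, so $\gE^1_D = \gE^2_D$. For each edge in this common set, $\text{RBF}(d^1_{ij}) = \text{RBF}(d^2_{ij})$; the one substantive point to verify here is that the radial basis feature map $\text{RBF}(\cdot)$ is injective on the relevant range of distances (this holds because the Gaussian RBF embedding with distinct centers is injective as a function of the scalar distance — distinct distances produce distinct feature vectors). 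Therefore $d^1_{ij} = d^2_{ij}$ for all $(i,j) \in \gE^1_D = \gE^2_D$, giving $\sC^1_D = \sC^2_D$.

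\textbf{Injectivity of the product.} Finally, if $\tilde f(\sC^1_T, \sC^1_D) = \tilde f(\sC^2_T, \sC^2_D)$, then by definition of the product map the two coordinates match separately: $f_T(\sC^1_T) = f_T(\sC^2_T)$ and $f_D(\sC^1_D) = f_D(\sC^2_D)$. Applying the two injectivity results just established yields $\sC^1_T = \sC^2_T$ and $\sC^1_D = \sC^2_D$, hence $(\sC^1_T, \sC^1_D) = (\sC^2_T, \sC^2_D)$.

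\textbf{Main obstacle.} None of the steps is deep; the only place requiring genuine care is the injectivity of the primitive lifting maps $\text{One-hot}(\cdot)$ and especially $\text{RBF}(\cdot)$. For $\text{One-hot}$ this is immediate. For $\text{RBF}$ one must be slightly careful: injectivity of a Gaussian RBF feature vector as a function of a scalar distance is true but relies on the centers being distinct and, strictly speaking, on restricting attention to the (bounded, nonnegative) range of physically meaningful distances; I would state this as a standing assumption on the RBF kernel rather than belabor the analytic verification, since the paper already treats $\text{RBF}$ as a fixed injective embedding in the spirit of~\cite{schutt2018schnet}.
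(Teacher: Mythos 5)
Your proof is correct and follows essentially the same route as the paper: recover the constrained node/edge set from the non-default control signals, reduce to injectivity of the primitive embeddings, and obtain the product claim coordinatewise (the paper merely packages the first step as a small lemma applied twice). The only point you defer to a standing assumption—injectivity of $\text{RBF}(\cdot)$—is the one point the paper actually argues, via the Gaussian kernel identity $\|\phi(d_1)-\phi(d_2)\|^2=2-2\exp(-\gamma\|d_1-d_2\|^2)$ for the (infinite-dimensional) feature map, with the finite truncation relegated to a footnote, so your treatment is no less rigorous in substance.
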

\endgroup
To prove Theorem~\ref{prop:injective}, we first prove the following lemma.
\begin{lemma}
\label{lemma:encoding-function-injective}
    If $g:\sR^J\mapsto\sR^K$ is injective, then $f(\sX)=\{ (i,g(\vk_i))    \}_{i\in\gV_\sX}$ is also injective, where $\sX=\{(i,\vk_i)\}_{i\in\gV_\sX}$.
\end{lemma}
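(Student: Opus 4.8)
The plan is to prove Lemma~\ref{lemma:encoding-function-injective} first, since it is the technical core, and then derive Theorem~\ref{prop:injective} as a straightforward corollary by exhibiting the relevant $g$ for each of $f_T$ and $f_D$. For the lemma, I would argue directly from the definition of injectivity. Suppose $f(\sX^1)=f(\sX^2)$ where $\sX^a=\{(i,\vk_i^a)\}_{i\in\gV_{\sX^a}}$ for $a\in\{1,2\}$. Unwinding the definition, $f(\sX^a)=\{(i,g(\vk_i^a))\}_{i\in\gV_{\sX^a}}$, so equality of these two sets first forces equality of the index sets $\gV_{\sX^1}=\gV_{\sX^2}=:\gV_\sX$ (the first coordinate of each pair identifies which node it belongs to, and a set is determined by which indices appear). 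Then for every $i\in\gV_\sX$ we must have $g(\vk_i^1)=g(\vk_i^2)$, and injectivity of $g$ yields $\vk_i^1=\vk_i^2$. Hence $\sX^1=\sX^2$. The converse direction ($\sX^1=\sX^2\implies f(\sX^1)=f(\sX^2)$) is immediate since $f$ is a well-defined function.

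With the lemma in hand, for $f_T$ I would take $g=\text{One-hot}:\{0,\dots,K-1\}\to\sR^K$, which is injective (distinct labels map to distinct standard basis vectors), so Lemma~\ref{lemma:encoding-function-injective} gives injectivity of $f_T$ on its domain $\gC_T(\gG)$; one should note the convention that for $f_T$ the relevant index set is $\gV_T$ and the map is restricted to nodes being constrained, so the zero-padding on $\gV\setminus\gV_T$ does not interfere. For $f_D$ I would take $g=\text{RBF}:\sR_{\ge 0}\to\sR^{d}$; here I need to invoke (or state as a standing assumption, consistent with how RBF featurizations are used elsewhere, e.g.\ \cite{schutt2018schnet}) that the chosen radial basis kernel is injective on the relevant range of distances — this is true for a generic choice of distinct RBF centers and is the one place where the argument depends on a property of the featurization rather than pure set theory. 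Given that, Lemma~\ref{lemma:encoding-function-injective} applies with the index set being the edge set $\gE_D$ (treating an edge as its pair of endpoints), giving injectivity of $f_D$.

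Finally, for the product function $\tilde f(\sC_T,\sC_D)=(f_T(\sC_T),f_D(\sC_D))$: injectivity of a Cartesian product of injective maps is elementary — if $\tilde f(\sC_T^1,\sC_D^1)=\tilde f(\sC_T^2,\sC_D^2)$ then componentwise $f_T(\sC_T^1)=f_T(\sC_T^2)$ and $f_D(\sC_D^1)=f_D(\sC_D^2)$, and applying the injectivity of $f_T$ and $f_D$ separately gives $\sC_T^1=\sC_T^2$ and $\sC_D^1=\sC_D^2$.

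The main obstacle, such as it is, is not the set-theoretic bookkeeping (which is routine once one is careful that a constraint is encoded as a \emph{set} of labeled tuples, so that matching the tuples forces both the index sets and the per-index values to agree) but rather pinning down the injectivity of $\text{RBF}(\cdot)$: strictly speaking an RBF layer with finitely many Gaussian bumps is not globally injective as a map $\sR\to\sR^d$ for every parameter setting, so the cleanest route is to state it as a mild assumption on the kernel (distinct centers, appropriate width) or to restrict attention to the compact range of physically meaningful distances where injectivity does hold. I would flag this explicitly rather than gloss over it. Everything else follows mechanically from Lemma~\ref{lemma:encoding-function-injective}.
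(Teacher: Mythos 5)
Your proof of the lemma is correct and essentially identical to the paper's: both unwind the set equality $f(\sX^1)=f(\sX^2)$ to force equality of the index sets $\gV_{\sX^1}=\gV_{\sX^2}$, then pointwise equality $g(\vk_i^1)=g(\vk_i^2)$, and conclude $\vk_i^1=\vk_i^2$ by injectivity of $g$. Your additional remarks (the instantiations of $g$ and the caveat about RBF injectivity) pertain to Theorem~\ref{prop:injective} rather than to the lemma itself; for the lemma, your argument matches the paper's chain of equivalences exactly.
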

\begin{proof}
$f(\sX^1)=f(\sX^2) \iff \{ (i,g(\vk^1_i))    \}_{i\in\gV_{\sX^1}}=\{ (i,g(\vk^2_i))    \}_{i\in\gV_{\sX^2}} \iff \gV_{\sX^1}=\gV_{\sX^2}\coloneqq\gV_\sX, g(\vk_i^1)=g(\vk_i^2), \forall i\in \gV_\sX \iff \gV_{\sX^1}=\gV_{\sX^2}\coloneqq\gV_\sX, \vk_i^1=\vk_i^2, \forall i\in \gV_\sX \iff \{ (i,\vk^1_i)    \}_{i\in\gV_{\sX^1}}=\{ (i,\vk^2_i)    \}_{i\in\gV_{\sX^2}}\iff \sX^1=\sX^2$,
where the third deduction step leverages the injectivity of function $g$.
\end{proof}

Now we are ready to prove Theorem~\ref{prop:injective}.
\begin{proof}
We first prove the injectivity of $f_T$. We choose $g$ to be the one-hot encoding function $\text{One-hot}(\cdot):\sR\mapsto \sR^K$. It is straightforward that this function is injective. By leveraging Lemma~\ref{lemma:encoding-function-injective}, the proof is completed.

For the injectivity of $f_D$, similarly we instantiate $g$ as the RBF feature map $\phi(\cdot):\sR\mapsto\sR^\infty$. Such map is injective, since $\|\phi(d_1)-\phi(d_2)\|^2=<d_1,d_1>+<d_2,d_2>-2<d_1,d_2>=1+1-2 \exp(-\gamma \|d_1-d_2\|^2)$, which implies $\phi(d_1)=\phi(d_2)\iff d_1=d_2$, hence injectivity. By leveraging Lemma~\ref{lemma:encoding-function-injective}, the proof is completed.\footnote{In practice we truncate the infinite-dimensional feature space by setting a limit on the number of bases, similar to~\citet{schutt2018schnet}.}

Since both $f_T$ and $f_D$ are injective, $(f_T(\sC^1_T),f_D(\sC^1_D))=(f_T(\sC^2_T),f_D(\sC^2_D))\iff f_T(\sC^1_T)=f_T(\sC^2_T), f_D(\sC^1_D)=f_D(\sC^2_D) \iff \sC^1_T=\sC^2_T, \sC^1_D=\sC^2_D \iff (\sC^1_T,\sC^1_D)=(\sC^2_T,\sC^2_D)$. Therefore the product function $\tilde{f}(\sC_T,\sC_D)\coloneqq(f_T(\sC_T),f_D(\sC_D))$ is also injective, which concludes the proof.
\end{proof}

\subsection{Equivariance}
\label{app:equiv}
\begin{proposition}[Equivariance]
\label{prop:equivariance}
    The conditional score $\bm\epsilon_\theta(\gG^{(t)}_\vz,\sC, t)$ is E(3)-equivariant, where $\sC$ is $\sC_T$ or $\sC_D$.
\end{proposition}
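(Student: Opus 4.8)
\textbf{Proof proposal for Proposition~\ref{prop:equivariance}.}

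The plan is to verify E(3)-equivariance of the conditional score by tracing through the denoiser and checking that the added control signals do not interfere with the group action. Concretely, an element $(\mR,\vt)\in E(3)$ acts on the latent graph $\gG_\vz^{(t)}=\{(\vz_i,\vec\vz_i)\}$ by fixing the invariant part $\vz_i$ and sending $\vec\vz_i\mapsto\mR\vec\vz_i+\vt$; equivariance of the score means $\bm\epsilon_\theta$ has an invariant block (for the $\vz$-channels) and an equivariant block (for the $\vec\vz$-channels, transforming by $\mR$ with the translation removed, since $\bm\epsilon_\theta$ predicts noise). The starting point is that the base denoiser, an equivariant GNN~\cite{kong2023end}, already satisfies this property; what must be shown is that augmenting the input with the constraint embeddings $\{\vl_i\}$ and $\{\vd_{ij}\}$ preserves it.

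First I would handle the type constraint $\sC_T$. The control signal $\vl_i$ from Eq.~\ref{eq:node-control-signal} is a fixed one-hot (or zero) vector that does not depend on the coordinates at all, hence it is trivially E(3)-invariant. Appending an invariant vector to the invariant node feature $\vh_i$ (equivalently to $\vz_i$) yields a new invariant node feature, so the hypotheses of the base equivariant GNN are met with $\vh_i$ replaced by $(\vh_i,\vl_i)$; equivariance of $\bm\epsilon_\theta(\gG_\vz^{(t)},\sC_T,t)$ follows immediately. Second, for the distance constraint $\sC_D$, the edge control signal $\vd_{ij}$ from Eq.~\ref{eq:edge-control-signal} is $\text{RBF}(d_{ij})$ on the constrained edges, where $d_{ij}$ is a prescribed scalar (a target distance), not a function of the current coordinates; on the remaining edges it is the null token $\phi$. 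In either case $\vd_{ij}$ is invariant under the group action on $\gG_\vz^{(t)}$, and it is processed by an additional dyMEAN layer~\cite{kong2023end}, which is itself E(3)-equivariant when fed invariant edge features. Feeding an invariant edge feature through an equivariant layer and adding its output into the message-passing stream preserves the invariant/equivariant type structure of the node representations, so $\bm\epsilon_\theta(\gG_\vz^{(t)},\sC_D,t)$ is again equivariant. The composed case $(\sC_T,\sC_D)$ is then immediate since both augmentations are invariant and independent.

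The main obstacle — really the only point requiring care — is the distance channel: one must be sure that $d_{ij}$ genuinely enters as a constant label rather than as the Euclidean distance recomputed from the (transformed) coordinates, and that the extra dyMEAN layer operating on the subgraph $(\gV,\gE_D)$ is plugged in so that it receives only invariant inputs and contributes only to the invariant/equivariant channels in the canonical way, without, e.g., using the constraint edges to build coordinate-dependent geometric features that would break the argument. Once the implementation is pinned down as in Appendix~\ref{app:edge_impl}, the proof reduces to the two observations above — invariance of the control signals and closure of equivariant GNN layers under appending invariant features — and the result follows. I would write it as: (i) recall the equivariance of the base denoiser; (ii) show $\vl_i$ and $\vd_{ij}$ are invariant; (iii) invoke the layer-level composition argument to conclude.
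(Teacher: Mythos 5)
Your proposal is correct and follows the same route as the paper's proof, which simply observes that the encodings of $\sC_T$ and $\sC_D$ are E(3)-invariant and that the base denoiser of \citet{kong2023end} therefore remains equivariant when these invariant features are appended. Your version merely spells out the two steps the paper leaves implicit (invariance of $\vl_i$ and $\vd_{ij}$, including the key point that $d_{ij}$ is a prescribed label rather than a recomputed coordinate-dependent distance, and closure of equivariant layers under invariant feature augmentation), so there is no substantive difference in approach.
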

The proof is straightforward since our encodings of $\sC_T$ and $\sC_D$ are both E(3)-invariant, therefore the E(3)-equivariance of the score is preserved, following the proof in~\citet{kong2023end}.

\section{Decompositions of Cyclic Strategies}
\label{app:decomp}

As illustrated in Fig.~\ref{fig:cycpep}, cyclic peptides are looped by four strategies, each of which can be decomposed into unit geometric constraints defined in Sec.~\ref{sec:geom-constraints} as follows. Specifically, the pair $(i,l_i)$ indicates a type constraint that node $i$ is required to be type $l_i$, and the triplet $(i,j,d_{ij})$ means a distance constraint that the pairwise distance between node $i,j$ should be $d_{ij}$.

\paragraph{Stapled peptide.} Given a lysine (K) located at index $i$, a stapled peptide can be formed via a covalent linkage between the lysine and either an aspartic acid (D) at $i+3$, with constraints as
\begin{align}
    \sC_{\text{Stapled-D},i}=(\{(i,\text{K}), (i+3,\text{D})\}, \{(i,i+3,d_{KD})\}),
\end{align}

or a glutamic acid (E) at $i+4$, with constraints as
\begin{align}
    \sC_{\text{Stapled-E},i}=(\{(i,\text{K}), (i+4,\text{E})\}, \{(i,i+4,d_{KE})\}),
\end{align}

where $d_{KD}, d_{KE}$ are the lengths of covalent linkages between the K-D and K-E pairs, respectively.

\paragraph{Head-to-tail peptide.} Given a peptide composed of $N$ amino acids indexed by $0, 1, \cdots, N-1$, an additional amide bond is linked between the head and tail amino acid as
\begin{align}
    \sC_{\text{Head-to-tail}}=(\{\},\{(0,N-1,d_P)\}),
\end{align}
where $d_P$ is the length of the amide bond.

\paragraph{Disulfide peptide.} Connecting two non-adjacent cysteines (C) at $i,j$ with a disulfur bond, a disulfide peptide is constrained by
\begin{align}
    \sC_{\text{Disulfide}, i, j}=(\{(i,\text{C}), (j,\text{C})\}, \{(i,j,d_S)\}),
\end{align}
where $d_S$ is the length of the disulfur bond.

\paragraph{Bicycle peptide} To link the three cysteines (C) at $i,j,k$, a bicycle peptide is constrained by
\begin{align}
    \sC_{\text{Bicycle}, i, j, k}=(\{(i,\text{C}), (j,\text{C}), (k,\text{C})\}, \{(i,j,d_T), (i,k,d_T), (j,k,d_T)\}),
\end{align}
where $d_T$ is the side length of the equilateral triangle formed by the centered 1,3,5-trimethylbenezene.

\section{Implementation Details}
\label{app:impl}
\subsection{Energy-based classifier guidance}

With the definition of the geometric constraints, we now introduce their corresponding \emph{energy function}, a scalar function that evaluates the satisfaction of the constraint given the input geometric graph.
\begin{definition}[Energy function of a constraint]
    An energy function of constraint $\sC$ is a differentiable function $g_\sC(\cdot):\gX\mapsto\sR_{\geq 0}$, such that $g_\sC(\gG)=0$ if $\gG\in\gX$ satisfies the constraint $\sC$ and $g_\sC(\gG)\neq 0$ otherwise.
\end{definition}
Intuitively, the energy function serves as an indicator of constraint satisfaction, following the conventional way of handling equality constraints~\cite{bertsekas2014constrained}. 

One naive way to tackle inverse problem is to directly optimize the energy function~\citep{yang2020learning,goldenthal2007efficient} of the constraint with respect to the initial latents $\gG_\vz^{(T)}$, since its minima correspond to the data points $\gG$ that satisfy the constraint. However, the large number of sampling steps $T$ required by diffusion models makes the optimization computationally prohibitive, as the gradient needs to be backpropagated through the denoiser $T$ times. Moreover, the energy function is not guaranteed to be convex, which further troubles the optimization.

Energy-based classifier guidance has been introduced to inject constraint as guidance of diffusion sampling in a soft and iterative manner. In our setting, we can pair up $ p_t (\sC|\gG_\vz)$ and the energy function through Boltzmann distribution, \emph{i.e.}, $ p_t (\sC|\gG_\vz)=\exp(-g_\sC(\gD_\xi(\gG_\vz))/Z$, where $Z$ is the normalizing constant. In this way, we have,
\begin{align}
\label{eq:classifier-guidance}
        \nabla_{\gG_\vz} \log p_t (\gG_\vz|\sC)=\nabla_{\gG_\vz} \log p_t (\gG_\vz) - w\nabla_{\gG_\vz}g_\sC(\gD_\xi(\gG_\vz)),
\end{align}
where $w\in\sR$ is added to control the guidance strength.
Performing such sampling procedure is equivalent to sampling from the posterior~\cite{ye2024tfg}:
\begin{align}
    p (\gG_\vz|\sC)\coloneqq p (\gG_\vz)\exp(-wg_\sC(\gD_\xi(\gG_\vz)))/Z,
\end{align}
which concentrates the density more on the regions with lower energy function value, biasing the sampling towards data points better satisfying the constraint $\sC=(\sC_T,\sC_D)$.

In our implementation, we adopt the guidance function in~\citet{PepGLAD} as the energy function $g_\sC$.
In particular, the choice of $w$ significantly influences the generation quality. A larger $w$ typically enhances control strength but degrades generation quality when becoming excessively large. To strike a balance between controllability and quality, we conduct a sweep across various $w$ values and ultimately employ $w \in \{10, 30, 50\}$ for energy-based classifier guidance. The best performance across different $w$ values is reported for all conditions.
\subsection{Distance Constraints as Edge-Level Control}
\label{app:edge_impl}

To inject the edge-level control into the model, we apply the adapter mechanism by adding an additional dyMEAN block~\cite{kong2023end} to each layer, and changing the message passing process into 
\begin{align}
    \{(\vh_i^{(l+0.5)}, \vec{\mX}_i^{(l+0.5)})\}_{i\in\gV} &= \text{AME}(\{(\vh_i^{(l)}, \vec{\mX}_i^{(l)})\}_{i\in\gV}, \{\vd_{ij}\}_{(i,j)\in\gE_D}, \gE_D),\\
    \{(\vh_i^{(l+1)}, \vec{\mX}_i^{(l+1)})\}_{i\in\gV} &= \text{AME}(\{(\vh_i^{(l+0.5)}, \vec{\mX}_i^{(l+0.5)})\}_{i\in\gV}, \varnothing, \gE),
\end{align}
where $\gE_D\subseteq\gE$ is the set of constrained edges, and AME is the Adaptive Multi-Channel Equivariant layer proposed in~\citet{kong2023end}. Readers are referred to the original paper for further details.

\subsection{Molecular Dynamics}
\label{app:md-detail}
We perform molecular dynamics (MD) simulations to assess the stability and binding affinity of linear peptides from the test set and cyclic peptides generated by our model.
Simulations are conducted using the Amber22 package with the CUDA implementation of particle-mesh Ewald (PME) MD and executed on GeForce RTX 4090 GPUs~\citep{salomon2013routine}.
For system preparation, the ff14SB force field is applied to proteins and peptides~\citep{maier2015ff14sb}. All systems are solvated to a 10 \text{\AA} truncated octahedron transferable intermolecular potential three-point (TIP3P) water box and 150 $nM$ $\ce{Na}^+$/$\ce{Cl}^-$ counterions are added to neutralize charges and simulate the normal saline environment~\citep{jorgensen1983comparison, li2024delineating}.
Prior to equilibration, two rounds of energy minimization are performed: the first relaxes solvent molecules and $\ce{Na}^+$/$\ce{Cl}^-$ counterions while keeping all other atoms fixed, and the second relaxes all atoms without constraints.
The systems are then gradually heated from 0 K to 310 K over 500 ps under harmonic restraints of 10 $\text{kcal}\cdot\text{mol}^{-1}\cdot\text{\AA}^{-2}$ on proteins and peptides.
Subsequently, equilibration is carried out at 300 K and 1 bar under NPT conditions, with harmonic restraints on protein and ligand atoms progressively reduced from 5.0 to 3.0, 1.0, 0.5, and finally 0.1 $\text{kcal}\cdot\text{mol}^{-1}\cdot\text{\AA}^{-2}$ spanning a total of 2.5 ns.
Production simulations are performed with temperature (300 K) and pressure (1 bar) using the Langevin thermostat and Berendsen barostat, respectively.
The SHAKE algorithm is applied to constrain covalent bonds involving hydrogen atoms~\citep{ryckaert1977numerical}, while non-bonded interactions are truncated at 10.0 \AA, with long-range electrostatics treated using the PME method.
To estimate peptide binding energies, we further employ MM/PBSA calculations~\citep{genheden2015mm}.
While MD simulations provide high accuracy in evaluating conformational stability and binding affinity, they are computationally expensive.
Therefore, we randomly select two target proteins from the test set and generate one cyclic peptide using head-to-tail and disulfide bond cyclization strategies for evaluation.

\subsection{Hyperparamter details}
We train \method{} on a 24G memory RTX 3090 GPU with AdamW optimizer. For the autoencoder, we train for up to 100 epochs and save the top 10 models based on validation performance. We ensure that the total number of edges (scaling with the square of the number of nodes) does not exceed 60,000. The initial learning rate is set to $10^{-4}$ and is reduced by a factor of 0.8 if the validation loss does not improve for 5 consecutive epochs. Regarding the diffusion model, we train for no more than 1000 epochs. The learning rate is $10^{-4}$ and decay by 0.6 and early stop the training process if the validation loss does not decrease for 10 epochs. During the training process, we set the guidance strength as 1 for sampling at the validation stage. The structure details of the autoencoder and the diffusion model are the same as~\citet{PepGLAD}. For the RBF kernel, we use 32 feature channels.

\section{Further Analysis}

\subsection{Necessity of RBFs}

We evaluate the influence of the RBFs to the quality of the generation of peptide under most difficult setting: Bicycle peptide (26 samples in test set). In Table~\ref{table:rbf}, Based on the validation and parameter sensitivity study, we can conclude the necessity of RBF design to support the distance control. Further, an saturation beyond 16 channels is observed, indicating that finite RBFs is enough for empirical performance.

\begin{table}[h]
\centering
\caption{Success rates among different number of RBFs}
\label{table:rbf}
\begin{tabular}{ll}
\toprule
Succ.(w=2) & Bicycle peptide \\
\midrule
RBFs=0 & 26.92\% \\
RBFs=16 & 30.76\% \\
RBFs=32 & 30.76\% \\
\bottomrule
\end{tabular}
\end{table}

\subsection{Generation efficiency}

In Table~\ref{table:time}, we show the runtime comparison between our method and the DiffPepBuilder when they both use a 24GB RTX3090 GPU.

\begin{table}[h]
\caption{Runtime of our method and DiffPepBuilder}
\label{table:time}
\centering
\begin{tabular}{lcc}
\toprule
 & CP-Composer & DiffPepBuilder \\
\midrule
second per peptide & 1.42s & 29.94s \\
\bottomrule
\end{tabular}
\end{table}

\section{Additional Visualizations}
\label{app:cases}
In Fig.~\ref{fig:appendix_cases}, we show more cases of the stapled, Head-to-tail, disulfur and bicycle peptide.

\begin{figure}[t!]
    \centering
    \includegraphics[width=0.95\textwidth]{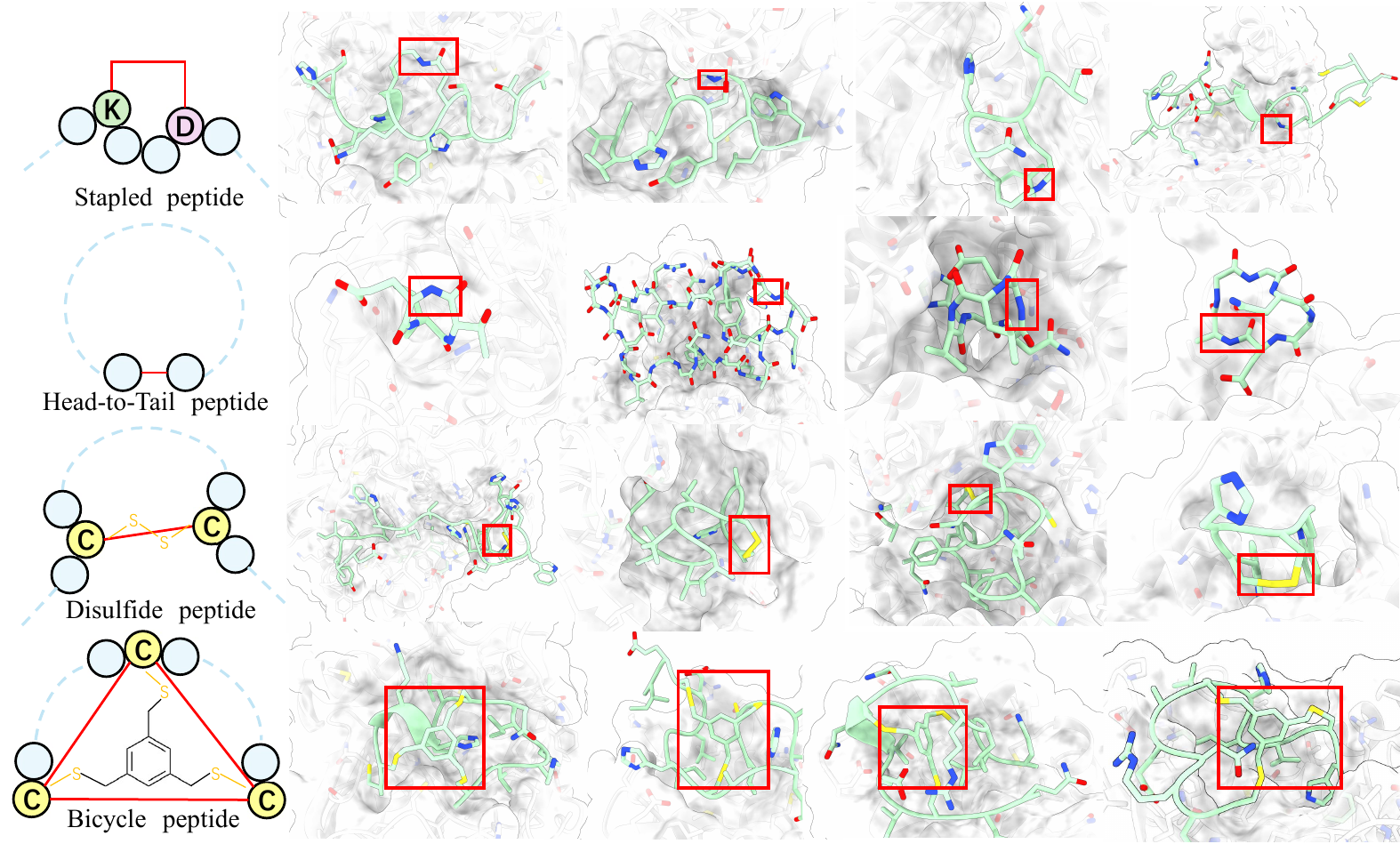}
        \vskip -0.1in
    \caption{Four types of generated cyclic peptides, with the red boxes highlighting the position for cyclization.}
    \label{fig:appendix_cases}
    \vskip -0.2in
\end{figure}

\section{Code Availability}
The codes for our~\method{} is provided at the link \url{https://github.com/jdp22/CP-Composer_final}.
%
%

\end{document}